\documentclass{article}
\usepackage{ijcai26}
\usepackage{amsfonts}
\usepackage{natbib}
\usepackage{amsmath}
\usepackage{multirow}
\usepackage{graphicx}
\usepackage{natbib}
\usepackage{booktabs}
\usepackage{threeparttable}
\usepackage{amssymb} 

\newcommand{\xmark}{\text{\sffamily\bfseries X}} 
\newcommand{\cmark}{\checkmark} 

\usepackage{times}
\usepackage{soul}
\usepackage{url}
\usepackage[hidelinks]{hyperref}
\usepackage[utf8]{inputenc}
\usepackage[small]{caption}
\usepackage{graphicx}
\usepackage{amsmath}
\usepackage{amsthm}
\usepackage{booktabs}
\usepackage{algorithm}
\usepackage{algorithmic}
\usepackage[switch]{lineno}

\newtheorem{theorem}{Theorem}
\newtheorem{assumption}{Assumption}
\newtheorem{Assumption}{Assumption}

\newtheorem{definition}{Definition}
\newtheorem{Definition}{Definition}
\newtheorem{Lemma}{Lemma}
\newtheorem{Theorem}{Theorem}

\title{Privacy Preserving Reinforcement Learning with One-Sided Feedback} 

\author{
Lin Cong$^{1,3}$\thanks{Authors are listed in alphabetical order.}
\and
Guangyan Gan$^1$\footnotemark[1]
\and
Hanzhang Qin$^{2}$\footnotemark[1]
\And
Zhenzhen Yan$^1$\footnotemark[1]\\
\affiliations
$^1$Nanyang Technological University\\
$^2$National University of Singapore\\
$^3$ Cornell SC Johnson College of Business\\
\emails
will.cong@ntu.edu.sg,
GUANGYAN001@e.ntu.edu.sg,
hzqin@nus.edu.sg, yanzz@ntu.edu.sg
}

\begin{document}

\maketitle

\begin{abstract}
 We study reinforcement learning (RL) in multi-dimensional continuous state and action spaces with one-sided feedback, where the agent receives partial observations of the state and obtains reward information for only a subset of the state-action space at each time step. This setting introduces substantial challenges in both learning efficiency and privacy preservation. To address these challenges, we propose \texttt{POOL}, a novel privacy-preserving RL algorithm. We conduct a comprehensive theoretical analysis of \texttt{POOL}, deriving a sample complexity bound of $\tilde{O}((1+E_\rho) H^3 \alpha^{-2})$, which matches the known lower bounds for non-private RL. Here, $E_\rho$  denotes the privacy parameter, $H$ is the time horizon,  and $\alpha$ is optimality-gap parameter.
Our findings show that it is possible to enforce strong privacy guarantees while maintaining high learning efficiency, marking a significant step toward practical, privacy-aware RL in multi-dimensional environments with one-sided feedback.
\end{abstract}

\section{Introduction}

Reinforcement learning (RL) with one-sided feedback refers to a learning setting within a Markov Decision Process (MDP) where the agent receives feedback (e.g., rewards or next-state observations) only for a restricted subset of state-action pairs. Specifically, feedback is observable only when the state-action pair falls within a predefined observable boundary, while no information is revealed otherwise. This partial observability introduces structural challenges in learning transition dynamics and estimating value functions, distinguishing it from traditional RL settings with full feedback. These challenges make standard RL techniques less effective, motivating the study of RL methods under one-sided feedback.  

In many real-world applications, online data collection is infeasible or expensive \citep{levine2020offline}, which motivates the use of offline RL. The objective of offline RL is to learn an optimal policy from pre-collected datasets without further interactions with the environment. This setting is particularly relevant in domains such as robotic control, clinical trials, and education \citep{zhou2024federated,rengarajan2024federated,nie2025toward}. Consequently, in this paper, we focus on offline RL under one-sided feedback.  

While offline RL has been extensively studied, research on RL with one-sided feedback remains limited and predominantly confined to one-dimensional settings~\citep{qin2023sailing,gong2023bandits,gong2020provably}. Moreover, many theoretical studies focus on tabular MDPs with finite, one-dimensional state and action spaces. In contrast, real-world applications such as marketing, autonomous systems, and healthcare involve \emph{continuous} and \emph{multi-dimensional} state-action spaces and often operate under partial or censored feedback. Bridging this gap requires scalable RL algorithms capable of handling complex, multi-dimensional environments with one-sided feedback.  

Furthermore, many applications involve sensitive data, making \emph{privacy} a critical concern~\citep{qiao2023near, qiao2024offline}. For example, in healthcare decision-making, RL algorithms are often used to optimize sequential treatment policies based on patient data~\citep{raghu2017continuous}. Without proper safeguards, such data usage can lead to privacy breaches and regulatory noncompliance. Existing research on differentially private RL has largely been limited to single-dimensional, tabular RL with full information on state–action pairs. Our work addresses this dual challenge: one-sided feedback in multi-dimensional continuous RL while ensuring differential privacy.  

We propose \texttt{POOL} (\underline{P}rivacy-\underline{O}riented \underline{O}ne-Sided \underline{L}earning), a novel privacy-preserving RL algorithm for multi-dimensional continuous MDPs with one-sided feedback. Prior works have either focused on privacy in tabular RL or on one-sided feedback without privacy considerations~\citep{qiao2023near,qiao2024offline,garcelon2021local,hossain2023hiding,qin2023sailing}. To the best of our knowledge, \texttt{POOL} is the first algorithm to jointly address both challenges in continuous, multi-dimensional environments.  

\noindent\textbf{Technical Innovations.}  
Extending private RL frameworks from tabular full-feedback settings~\citep{qiao2024offline} to continuous, multi-dimensional environments with one-sided feedback introduces significant challenges. These arise from the high computational complexity of infinite state and action spaces and the systematic bias induced by partial observability. We tackle these challenges via a carefully designed \emph{partial discretization strategy} and \emph{multi-dimensional piecewise-linear approximation} techniques. Together, these innovations form the first scalable, privacy-preserving RL approach with provable guarantees for continuous, multi-dimensional settings under one-sided feedback. Table~\ref{table:comparison} compares our results with existing work, highlighting improvements in dimensionality, privacy, and feedback handling.  

\begin{table*}[ht]
\centering
\begin{threeparttable}
\begin{tabular}{lcccccc}
\toprule
Literature & Finite/Continuous & Dimension & Private  &One-Sided & Sample Complexity \\
\midrule
\cite{qin2023sailing} & Continuous & One & \xmark& \cmark & $\tilde{O}\left( H^{3}\alpha^{-2}\right)$ \\ 
\cite{sidford2018near} & Finite & One & \xmark & \xmark  & $\Omega\left(H^{-3} \alpha^{-2}|\mathcal{S}| |\mathcal{A}| \right)$ \\ 
\cite{qiao2024offline} & Finite & One & \cmark  & \xmark&$\tilde{O}\left((1+E_\rho) H^{3}\alpha^{-2}\right)$ \\
\textbf{This paper}& Continuous & Multi & \cmark & \cmark & $\tilde{O}\left((1+E_\rho) H^{3}\alpha^{-2}\right)$ \\
\bottomrule
\end{tabular}
\caption{Comparison of our results to existing work. Here, $H$ denotes the episode length, $\alpha$ is the optimality gap, $E_\rho$ is a privacy-related parameter whose precise definition will be provided in Theorem~\ref{th2}, and $|\mathcal{S}|$ and $|\mathcal{A}|$ represent the cardinalities of the state and action spaces, respectively.}
\label{table:comparison}
\end{threeparttable}
\end{table*}

\noindent\textbf{Our Contributions.} Building on these innovations, our key contributions are summarized as follows:

\begin{itemize}
    \item \textbf{Problem Formulation:} We present the \emph{first} privacy-preserving RL framework for multi-dimensional continuous MDPs under one-sided feedback, addressing a previously unexplored gap in the literature.  
    
    \item \textbf{Algorithmic Innovation:} We introduce \texttt{POOL}, a scalable privacy-preserving RL algorithm that integrates partial discretization with multi-dimensional piecewise-linear approximation. This design enables efficient learning in continuous, multi-dimensional spaces while maintaining strong privacy guarantees and outperforming existing approaches under one-sided feedback.  

    \item \textbf{Theoretical Guarantees:} We prove that \texttt{POOL} satisfies $\rho$-zero-concentrated differential privacy ($\rho$-zCDP) (Definition~\ref{def: zCDP}) and establish a sample complexity upper bound of $\tilde{O}\left((1+E_\rho) H^{3}\alpha^{-2}\right)$, which matches the information-theoretic lower bound while ensuring rigorous privacy protection (Theorem~\ref{th2}). This provides formal assurances on both privacy and learning performance.  

    \item \textbf{Empirical Validation:} We conduct extensive experiments in the inventory control problem to demonstrate that \texttt{POOL} not only preserves privacy but also achieves superior performance compared to existing baselines under one-sided feedback.
\end{itemize}

\noindent\textbf{Related Work.}  
Recent years have witnessed significant advancements in privacy-preserving RL~\citep{vietri2020private, shariff2018differentially, guha2013nearly}. \citet{qiao2023near} introduced near-optimal private policy learning for offline RL, while \citet{qiao2024offline} developed differentially private exploration strategies that achieve sublinear regret in online RL. However, these methods are restricted to discrete settings under full feedback, leaving the more complex continuous environments with one-sided feedback largely unexplored.

In parallel, a substantial body of work has focused on RL with continuous state and action spaces, which are crucial in various control applications~\citep{munos1997reinforcement, jia2022policy, zhao2024policy, wang2020reinforcement, antos2007fitted, carrara2019budgeted}. Discretization and function approximation methods~\citep{qin2023sailing, gong2020provably} have shown promise in these settings. However, these advances either operate in restrictive one-dimensional environments or fail to consider privacy concerns~\citep{qin2023sailing, gong2020provably}. To date, no method simultaneously addresses privacy, multi-dimensionality, and continuous control.

RL with one-sided feedback has gained substantial attention due to its practical relevance in a wide range of problems \citep{gong2020provably, zhao2019stochastic, yuan2021marrying}. This includes complex scenarios like second-price auction design \citep{haoyu2020online}, inventory control in lost-sales settings \citep{qin2023sailing, gong2023bandits}, and dynamic product pricing \citep{cohen2020feature}. For example, \citet{feng2018learning} explores learning in repeated auction environments where bidders do not know their valuation for the items being auctioned, leveraging partial feedback structures. \citet{zhao2019stochastic} addresses one-sided  information feedback in bandit problems, while \citet{lobel2018multidimensional} studies multidimensional search problems under one-sided feedback. \citet{gong2020provably} introduces an efficient Q-learning framework for one-sided feedback. However, these works focus primarily on one-dimensional settings and neglect privacy concerns. To the best of our knowledge, we are the first to study multi-dimensional continuous RL with one-sided feedback while incorporating rigorous privacy guarantees.

\textbf{This paper closes a crucial gap.} To the best of our knowledge, this paper is the first to develop scalable privacy-preserving RL algorithms for multi-dimensional continuous MDPs with one-sided feedback. Our framework introduces a scalable partial discretization strategy and a piecewise-linear approximation scheme, enabling rigorous privacy guarantees in complex environments that were previously considered intractable.

\textbf{Notation.} Throughout this paper, we denote the $\ell_p$ norm of a vector $\boldsymbol{a}$ by $\|\boldsymbol{a}\|_p$. For any $H \in \mathbb{N}_+$, let $[H]$ denote the set $\{1, 2, \dots, H\}$. The natural logarithm is denoted by $\log(\cdot)$. Let $\mathbf{I} = [1, 1, \dots, 1] \in \mathbb{R}^{w + d}$ denote the all-one vector of dimension $w + d$. We use the standard asymptotic notations $O(\cdot)$ and $\Omega(\cdot)$ as defined in~\cite{cormen2022introduction}. When logarithmic factors are omitted, we use $\tilde{O}(\cdot)$ and $\widetilde{\Omega}(\cdot)$, respectively.


\section{Preliminaries}
\label{sec:problem}

We first outline the framework of multi-dimensional RL with one-sided feedback in Section~\ref{sec:RL}, followed by a brief overview of differential privacy in Section~\ref{sec:Preliminaries on differential privacy}.

\subsection{Multi-dimensional RL with One-Sided Feedback}
\label{sec:RL}

We consider a finite-horizon episodic MDP with continuous, multi-dimensional state and action spaces, specified by the tuple  
\(\left(\mathcal{S}, \mathcal{A}, H, \left\{P_h\right\}_{h=1}^H, \left\{r_h\right\}_{h=1}^H, d_1\right),\)
where $\mathcal{S} = [0,1]^w$ denotes the $w$-dimensional state space, $\mathcal{A} = [0,1]^d$ denotes the $d$-dimensional action space, and $H$ is the episode horizon.  In contrast to classical tabular MDPs with discrete state-action spaces, we consider a setting where both states and actions lie in continuous spaces. Such settings frequently arise in real-world applications. For instance, in inventory management, where the inventory level represents the state and the order quantity serves as the action, both of which are naturally continuous. Other examples include energy management systems, where battery levels and power output represent the state and action, respectively; financial portfolio optimization, with asset prices as states and allocation proportions as actions; and auction design, where the bidder value distributions form the state and the reserve price serves as the action.
At each time step $h \in [H]$, the (potentially non-stationary) transition kernel $P_h: \mathcal{S} \times \mathcal{A} \times \mathcal{S} \to [0,1]$ maps each state-action pair $(s_h,a_h)$ to a probability distribution $P_h(\cdot \mid s_h, a_h)$ over next states. 
The reward function $r_h: \mathcal{S} \times \mathcal{A} \to [0,1]$ is assumed to be known.\footnote{This is a standard assumption in control and offline RL settings, where reward uncertainty is negligible compared to transition uncertainty; see \cite{qiao2024offline}.} The initial state distribution is denoted by $d_1$.
A policy $\pi = (\pi_1, \ldots, \pi_H)$ consists of stochastic decision rules, where each $\pi_h: \mathcal{S} \to \Delta(\mathcal{A})$ maps a state to a distribution over actions at step $h$. At each step, the agent observes $s_h$, samples $a_h \sim \pi_h(\cdot \mid s_h)$, receives reward $r_h(s_h, a_h)$, and transitions to $s_{h+1} \sim P_h(\cdot \mid s_h, a_h)$. The episode terminates after $H$ steps.  
A trajectory $\left(s_1, a_1, r_1, \ldots, s_H, a_H, r_H, s_{H+1}\right)$ is generated according to: $s_1 \sim d_1$, $a_h \sim \pi_h(\cdot \mid s_h)$, $r_h = r_h(s_h, a_h)$, and $s_{h+1} \sim P_h(\cdot \mid s_h, a_h)$ for each $h \in [H]$.
For notational clarity, we omit explicit arguments when there is no risk of ambiguity. For example, we write $P_h$ instead of $P_h(s_{h+1} \mid s_h, a_h)$, and similarly simplify other functions such as $V_h^{\pi}$ for $V_h^{\pi}(s_h)$ and $Q_h^\pi=Q_h^\pi(s_h, a_h)$, when the context is clear.
\paragraph{Value Functions and Bellman Equations.} 
The value and Q-value functions under policy $\pi$ are defined by:
\(V_h^\pi(s) = \mathbb{E}_\pi\left[\sum_{t=h}^H r_t \mid s_h = s\right], \quad Q_h^\pi(s, a) = \mathbb{E}_\pi\left[\sum_{t=h}^H r_t \mid s_h = s, a_h = a\right], \)
for all $h \in [H]$, $s \in \mathcal{S}$, and $a \in \mathcal{A}$. 
We use the Bellman equations to characterize the system dynamics:
\(Q_h^\pi = r_h + P_h V_{h+1}^\pi\), where \(V_h^\pi = \mathbb{E}_{a \sim \pi_h}\left[Q_h^\pi\right].\)
The value and Q-functions under the optimal policy are given by:
\(Q_h^\star = r_h + P_h V_{h+1}^\star, \quad V_h^\star = \max_{a} Q_h^\star(\cdot, a).\)
The performance metric of each policy $\pi$ is defined by 
\(v^\pi := \mathbb{E}_{d_1}\left[V_1^\pi\right] = \mathbb{E}_{\pi, d_1}\left[\sum_{t=1}^H r_t\right].\) 
We consider an offline RL setting and employ a backward induction approach to solve the optimization problem, initializing with \( V_{H+1}^{\pi} = 0 \) for any $\pi$.

\paragraph{Full-Feedback.}
Under the full-feedback setting, both rewards and transitions for \emph{all} state–action pairs are fully observable, making them entirely learnable.

\paragraph{One-Sided Feedback.}
We first introduce observable boundaries, denoted by $\boldsymbol{\lambda}_h = [\lambda_h^1, \lambda_h^2, \ldots, \lambda_h^{w+d}]$ to capture the feedback structure. This notion is motivated by \cite{qin2023sailing}, which focuses on one-dimensional RL without addressing privacy guarantee.

\begin{definition}[Observable Boundary Comparison]
For any state-action pair $b = (s,a) \in \mathcal{S} \times \mathcal{A}$ and observable boundary $\boldsymbol\lambda = [\lambda^1, \ldots, \lambda^{w+d}]$, we define: (1) $b < \boldsymbol\lambda$ if $b^i < \lambda^i$ for all $i \in [w+d]$; (2) $b \geq \boldsymbol\lambda$ if there exists $i \in [w+d]$ such that $b^i \geq \lambda^i$.
\end{definition}

We now formally introduce one-sided feedback:

\begin{definition}[One-Sided Feedback]
\label{OSF}
At period $h$, the agent observes rewards and transitions only for state-action pairs lying on \emph{one side} of the observable boundary $\boldsymbol{\lambda}_h$.  
Specifically: (1) Under lower one-sided feedback, rewards and transitions are revealed for state-action pairs satisfying $b_h = (s_h, a_h) < \boldsymbol{\lambda}_h$; (2) Under higher one-sided feedback, rewards and transitions are revealed for state-action pairs satisfying $b_h > \boldsymbol{\lambda}_h$.
No feedback is available for state-action pairs on the opposite side of the boundary.
\end{definition}

One-sided feedback naturally arises in applications such as inventory control with lost sales. A concrete example is provided in Appendix.  
Our techniques apply symmetrically to lower and higher one-sided feedback. We focus on \textbf{lower one-sided feedback} without loss of generality.
\paragraph{Objective.}
Consider a dataset $\mathcal{D} = \left\{(s_h^\tau, a_h^\tau, r_h^\tau, s_{h+1}^\tau)\right\}_{\tau \in [n]}^{h \in [H]}$ collected under an unknown behavior policy $\mu$.
By \emph{unknown behavior policy}, we mean that the data were generated by some fixed, possibly stochastic, policy $\mu$, whose form or parameters are not accessible to the learner.  This setting is standard in offline RL, where interaction with the environment is prohibited and the only available information is the static dataset $\mathcal{D}$. The goal of offline reinforcement learning is to find a policy $\pi_{\text{alg}}$ such that
\(v^\star - v^{\pi_{\text{alg}}} \leq \alpha\)
for a target accuracy level $\alpha > 0$, while  ensuring \emph{differential privacy} (see Section~\ref{sec:Preliminaries on differential privacy}).

\subsection{Differential Privacy}
\label{sec:Preliminaries on differential privacy}

Differential privacy (DP) provides a principled framework for ensuring privacy in learning algorithms.  
The standard DP definition \cite{dwork2006differential} is as follows:

\begin{definition}[$(\epsilon, \delta)$-DP]
\label{Def:DP}
A randomized mechanism $\mathcal{M}$ satisfies $(\epsilon, \delta)$-DP if for any pair of neighboring datasets $\mathcal{U}$ and $\mathcal{U}'$, which differ by exactly one data point, and for any measurable event $E$:
\(\mathbb{P}[\mathcal{M}(\mathcal{U}) \in E] \leq e^{\epsilon} \cdot \mathbb{P}[\mathcal{M}(\mathcal{U}') \in E] + \delta. \)
\end{definition}
In offline RL, each data point corresponds to a trajectory, and our goal is to design privacy-preserving algorithms that guarantee trajectory-level privacy.
We adopt $\rho$-zCDP (defined in Definition \ref{def: zCDP}) as our primary privacy notion due to its analytical simplicity, following \cite{qiao2024offline}. 
\begin{definition}[$\rho$-zCDP \cite{bun2016concentrated}]
\label{def: zCDP}
A randomized mechanism $\mathcal{M}$ satisfies $\rho$-Zero-Concentrated Differential Privacy ($\rho$-zCDP) if for all pairs of neighboring datasets $\mathcal{U}, \mathcal{U}'$ and all $\alpha > 1$:
\(R_\alpha\left(\mathcal{M}(\mathcal{U}) \,\|\, \mathcal{M}(\mathcal{U}')\right) \leq \rho \alpha, \)
where $R_\alpha$ is the Rényi divergence of order $\alpha$ of $\mathcal{M}(\mathcal{U})$ from $\mathcal{M}(\mathcal{U}^{\prime})$ \cite{van2014renyi}.
\end{definition}
Additional technical properties of $\rho$-zCDP (e.g.,
composition) are provided in Section 3 of the Appendix. 
We adopt $\rho$-zCDP in the subsequent analysis of privacy guarantees. It is worth noting that $\rho$-zCDP guarantees can be translated into $(\epsilon, \delta)$-DP through an appropriate parameter mapping. We refer interested readers to Lemma 2 in the Appendix for details. 

\section{Privacy Preserving RL with One-Sided Feedback}
\label{sec:algorithm}
In this section, we propose a differentially private algorithm named \texttt{POOL} for multi-dimensional RL with one-sided feedback that ensures $\rho$-zCDP.  The key algorithmic components are outlined in Section~\ref{sec: Key Ingredients}, while detailed analysis of the privacy mechanisms is deferred to Appendix due to space constraints.

\subsection{Design Overview}
\label{sec: Design overview}
Algorithm~\ref{alg1-offRL} provides an overview of \texttt{POOL}, which builds upon the framework introduced by~\cite{qiao2024offline}.  They constructs private value functions for RL in finite state-action spaces with full feedback. To extend this framework to multi-dimensional, continuous environments with one-sided feedback, we introduce a tailored partial discretization technique that addresses the complexity of continuous action spaces and the partial observability characteristic of one-sided feedback. In addition, we propose a multi-dimensional piecewise-linear approximation to represent the value function over discretized action zones, facilitating efficient and accurate estimation under privacy constraints.
In summary, our algorithm proceeds iteratively over each stage $h \in [H]$ with the following steps:

\begin{itemize}
    \item \textbf{Step 1: Discretization.} Apply the partial discretization technique to divide the multi-dimensional action space into $M$ zones based on the $\ell_2$ norm (Line~\ref{alg1:discretize} in Algorithm~\ref{alg1-offRL}).
    
    \item \textbf{Step 2: Private Estimation.} Construct differentially private value function estimates for each discretized region (Line~\ref{alg1:private-value} in Algorithm~\ref{alg1-offRL}).
    
    \item \textbf{Step 3: Piecewise-linear Approximation.} Interpolate the value function using a multi-dimensional piecewise-linear approximation based on sampled points within each zone (Line~\ref{alg1:piecewise-linear} in Algorithm~\ref{alg1-offRL}).
\end{itemize}
\subsection{Key Ingredients}
\label{sec: Key Ingredients}
In this section, we introduce three key ingredients of our proposed algorithm: the \emph{Gaussian mechanism}, a \emph{discretization strategy}, and a \emph{multi-dimensional piecewise-linear approximation}. These components are crucial for efficiently addressing the challenges posed by multi-dimensional RL problems with one-sided feedback.

\paragraph{\textbf{Gaussian Mechanism.}} 
The Gaussian mechanism is one of the most widely used techniques in the differential privacy literature for preserving privacy~\cite{dwork2014algorithmic}. A straightforward approach in multi-dimensional RL is to add Gaussian noise directly to sensitive data, but this often leads to severe performance degradation~\cite{chen2022privacy}. To address this, we adopt a more principled approach by applying the Gaussian mechanism to privatize estimates of visit counts, transition kernels, and value functions. Specifically, following \cite{qiao2024offline}, we add Gaussian noise to the counts of state-action pairs, with the variance of this noise calibrated according to the sensitivity and privacy parameters. To achieve $\rho$-zCDP, we build on the privatized counts from the  Gaussian mechanism to construct estimators, such as private transition probabilities denoted by $\widetilde{P}_h$ and private value functions denoted by $\widetilde{Q}_h$ and $\widetilde{V}_h$, respectively (for further details, see Section 2 in Appendix). This approach ensures differential privacy throughout the learning process while maintaining estimation accuracy.

\paragraph{\textbf{Discretization Strategy.}} 
A natural approach to solving continuous RL problems is to discretize the state-action space and apply tabular RL methods~\citep{qin2023sailing, gong2023bandits}. However, this approach scales poorly in multi-dimensional settings due to the exponential growth in discrete zones when each dimension is discretized independently.
To illustrate, consider a state-action space with $w + d$ dimensions, each discretized into $q$ bins. This results in $q^{w+d}$ discrete zones, which grows exponentially with the number of dimensions. For example, with $10$ dimensions and $5$ bins per dimension, the total reaches $5^{10}$, rendering the method impractical for multi-dimensional problems.

Another challenge in RL with one-sided feedback is the partial observability of state-action pairs and their corresponding transition kernels and value functions. Existing approaches often use partial discretization of the action space \cite{qin2023sailing, gong2023bandits}, where the observable region is divided into $M$ equal-length segments, and unobserved regions are assigned the estimated $Q$-values for the boundary. However, this method is tailored to one-dimensional settings and does not scale effectively to the multi-dimensional scenarios common in practice.

To address these challenges, we propose a scalable discretization strategy that partitions the $(w + d)$-dimensional state-action space into $M$ zones based on the $\ell_p$ norm, constrained to the observable region, i.e., $(s_h, a_h) < \boldsymbol{\lambda}_h$. For simplicity and geometric interpretability, we adopt the $\ell_2$ norm.A vector $b \in \mathbb{R}^{w + d}$ is assigned to zone $m \in [M]$ if and only if
$\frac{(w + d)(m-1)}{M} \leq \|b\|_2 < \frac{(w + d)m}{M},$
where $M$ is the discretization level, which influences the approximation error of the piecewise-linear model in bounding the performance loss.

This construction leads to intuitive geometric partitions, especially in low-dimensional cases.  For example, in one dimension, the space is divided into $M$ equal-length intervals ---an approach that aligns with standard discretization technique used in the literature \cite{qin2023sailing,gong2023bandits}.

This norm-based discretization preserves rotational symmetry and enables structured, zone-wise approximation. Compared to traditional axis-aligned discretization, it reduces the number of discrete regions, improving scalability for multi-dimensional RL tasks.

\paragraph{\textbf{Multi-dimensional Piecewise-Linear Approximation.}} 
Most existing research applies a piecewise-linear approximation to estimate value functions in one-dimensional RL. However, to the best of our knowledge, this approach has not been extended to multi-dimensional settings. To enable scalability in multi-dimensional setting, we iteratively construct a set of \(w + d\) linearly independent vectors \(\{b_{m}^1, \dots, b_{m}^{w + d}\}\) to span each discretized zone. Any finite-dimensional vector space admits a basis, and since each zone lies in the finite-dimensional space \( \mathbb{R}^{w+d} \), such a basis always exists (Theorem 1 in Appendix).  We begin by randomly selecting an initial vector \(b_{m}^1\) in zone \(m\). Subsequently, we iteratively construct additional vectors within zone \(m\), such as \(b_{m}^2\), ensuring each new vector is linearly independent of the previously selected ones. The process continues until a complete set of \(w + d\) linearly independent vectors \(\{b_{m}^1, \dots, b_{m}^{w + d}\}\) is obtained.

To improve computational efficiency, we employ orthogonal basis vectors, as their independence enables direct computation of projection coefficients without solving linear systems. This significantly accelerates interpolation. 
Once the \(w + d\) linearly independent vectors are identified, we apply the Gram-Schmidt orthogonalization process \cite{strang2022introduction} to construct orthogonal basis vectors.

For any state-action pair \((s_h, a_h)\) within a zone, we approximate its \(Q\)-value as a linear combination of the \(Q\)-values at the basis vectors. Specifically, we represent \((s_h, a_h)\) as 
\((s_h, a_h) = \sum_{j=1}^{w+d} m_j b_{m}^j.\) Then its \(Q\)-value is approximated by \( \underline{Q}_h(s_h, a_h) = \sum_{j=1}^{w+d} m_j \overline{Q}_h(b_{m}^j),\) where $\underline{Q}_h(s_h, a_h)$ denotes the approximated \(Q\)-value at any continuous state-action pair, and $\overline{Q} $ refers to the \(Q\)-value at discrete base points. The corresponding value function is then given by:
\(\widetilde{V}_h(s_h) = \max_{a_h} \underline{Q}_h(s_h, a_h).\) For state-action pairs that fall outside the observable boundary, i.e., \((s_h, a_h) \geq \boldsymbol{\lambda}_h\), the estimated \(Q\)-value is assigned a constant value equal to that at the boundary.

\begin{figure}[t]
\centering
\begin{algorithm}[H]
\caption{Privacy-Oriented One-Sided Learning (\texttt{POOL})}
\begin{algorithmic}[1]
\label{alg1-offRL}
\STATE \textbf{Input:} Offline dataset $\mathcal{D}$, reward $r$, constants $C_1=\sqrt{2}, C_2=16, C > 1$, failure probability $\delta$, zCDP budget $\rho$, $E_\rho$ and $\iota$
\STATE \textbf{Initialize:} Estimate $\widetilde{P}_h$, set $\widetilde{V}_{H+1} = 0$ 
\FOR{$h = H$ to $1$}
\IF{ $(s_h,a_h)<\boldsymbol{\lambda}_h$}
    \STATE Partition the state-action space into $K$ zones according to $\ell_2$ norm \label{alg1:discretize}
    \FOR{each zone $m \in [M]$ and basis vector $b_{m}^j$, $j \in [w + d]$}
        \STATE $\widetilde{Q}_h(b_{m}^j) \leftarrow r_h(b_{m}^j) + \widetilde{P}_h \cdot \widetilde{V}_{h+1}(b_{m}^j)$ \label{alg1:private-value}
        \STATE Compute $\Gamma_h(b_{m}^j)$ (or CH if $n \leq E_\rho$)
        \STATE $\widehat{Q}_h^p(b_{m}^j) \leftarrow \widetilde{Q}_h(b_{m}^j) - \Gamma_h(b_{m}^j)$ \label{pessimism}
        \STATE $\overline{Q}_h= \min \left\{\widehat{Q}_h^p, H-h+1\right\}^{+}$
    \ENDFOR
    \STATE Represent $(s_h, a_h)$ in zone $m$ as: $(s_h, a_h) = \sum_{j=1}^{w+d} m_j b_{m}^j$
    \STATE $\underline{Q}_h(s_h, a_h) = \sum_{j=1}^{w+d} m_j \overline{Q}_h(b_{m}^j)$ \label{alg1:piecewise-linear}

    \ENDIF
    \STATE $\underline{Q}_h(s_h, a_h) = \overline{Q}_h(\boldsymbol{\lambda}_h)$
  \STATE $\widetilde{\pi}_h(\cdot \mid s_h) = \arg\max_{\pi} \mathbb{E}_{a_h \sim \pi}[\underline{Q}_h(s_h, a_h)]$

\STATE $\widetilde{V}_h(s_h) = \mathbb{E}_{a_h \sim \widetilde{\pi}_h(\cdot \mid s_h)} \big[ \underline{Q}_h(s_h, a_h) \big]$
\ENDFOR
\STATE \textbf{Output:} Policy $\{\widetilde{\pi}_h\}_{h=1}^H$
\end{algorithmic}
\end{algorithm}
\end{figure}

\section{Theoretical Analysis}
\label{sec: theory}

We now present a formal analysis of Algorithm~\ref{alg1-offRL}, establishing its differential privacy guarantee and analyzing its sample complexity, i.e.,  the number of samples required to ensure both privacy preservation and near-optimal performance guarantee (Theorem~\ref{th2}).

We begin by introducing a mild regularity condition on the reward function.

\begin{assumption}
\label{asp: lipschitz} 
For all $h \in [H]$, the reward function $\widetilde{r}_h$ satisfies the following generalized Lipschitz condition:
\[
\left|\widetilde{r}_{h}\left( b_h\right)-\widetilde{r}_{h}\left(b_h^{\prime}\right)\right|  \leq l_h \left| \|b_h\|_2 - \| b_h^{\prime}\|_2 \right|,
\]
where $l_h$ denotes a generalized Lipschitz constant.
\end{assumption}
This condition is relatively mild, as it is satisfied by a broad class of distributions (see Appendix for details), and frequently arises in practical domains such as inventory control and capital management. For example, \cite{qin2023sailing} adopt a similar condition in a one-dimensional RL setting.  

Notably, this assumption modifies the traditional Lipschitz condition to depend on differences of norms rather than norms of differences. This adjustment reflects that the reward primarily depends on the magnitude of the state–action vector rather than its direction, yielding a less restrictive yet sufficient smoothness condition for multi-dimensional RL analysis. Furthermore, it facilitates bounded discretization error, enabling efficient learning in high-dimensional settings.

We first derive the following bound on the piecewise-linear approximation error:

\begin{theorem}[Bound on Piecewise-Linear Approximation Error]
\label{lemma:Q_difference_bound}
For any \( b_h = (s_h, a_h) \in [0, 1]^{w+d} \), the piecewise-linear approximation error is bounded as follows:
$\left| \underline{Q}_{h}(b_h) - \overline{Q}_{h}(b_h) \right|
\leq \frac{L_h \sqrt{w + d}}{M} + L_h \left| w + d - \|\boldsymbol{\lambda}_h\|_2 \right|,$ where $L_h = (H - h + 1)L$, and $L = \max_h l_h$.
\end{theorem}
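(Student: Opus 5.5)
The plan is to split on whether $b_h$ lies in the observable region, $b_h<\boldsymbol{\lambda}_h$, or not. Before that, I need a Lipschitz-type property for $\overline{Q}_h$: a ``generalized Lipschitz'' bound of the form
\[
\left|\overline{Q}_h(b)-\overline{Q}_h(b')\right|\le L_h\left|\|b\|_2-\|b'\|_2\right|,
\qquad L_h=(H-h+1)L .
\]
I would prove this by backward induction on $h$, starting from $\widetilde V_{H+1}=0$. Assumption~\ref{asp: lipschitz} gives a per-step contribution of at most $l_h\le L$. Then $\widetilde Q_h=r_h+\widetilde P_h\widetilde V_{h+1}$, and the next-step value contributes at most $(H-h)L$. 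The truncation $\min\{\cdot,H-h+1\}^+$ is $1$-Lipschitz, so it preserves the bound. Summing over the remaining $H-h+1$ steps yields the constant $L_h$.

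\textbf{Observable case.} Take $b_h$ in zone $m$ with $b_h=\sum_j m_j b_m^j$. The basis vectors $b_m^j$ all lie in the same $\ell_2$ shell as $b_h$, so $\bigl|\|b_h\|_2-\|b_m^j\|_2\bigr|\le (w+d)/M$. I would first try to relate the interpolant to a convex combination, i.e.\ arrange $\sum_j m_j=1$ with $m_j\ge 0$ by choosing the $b_m^j$ appropriately. Then
\[
\left|\underline Q_h(b_h)-\overline Q_h(b_h)\right|\le\sum_j m_j\left|\overline Q_h(b_m^j)-\overline Q_h(b_h)\right|\le L_h\cdot\text{width}.
\]
To get $\sqrt{w+d}/M$ rather than $(w+d)/M$, I would rescale the shells. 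Since $\|b\|_2\le\sqrt{w+d}$ on $[0,1]^{w+d}$, the effective radial width of each shell intersected with the cube should be $\sqrt{w+d}/M$, i.e.\ the shells that actually meet the cube have that width. I expect this step to be the main obstacle. The interpolation coefficients from an orthogonal basis need not be a convex combination, so I would either argue the basis is chosen so that $b_h$ lies in their convex hull, or bound $\sum_j|m_j|$ via orthogonality and absorb the discrepancy into the stated constant.

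\textbf{Unobservable case.} If $b_h\ge\boldsymbol{\lambda}_h$, the algorithm sets $\underline Q_h(b_h)=\overline Q_h(\boldsymbol{\lambda}_h)$. Applying the Lipschitz property gives
\[
\left|\overline Q_h(\boldsymbol{\lambda}_h)-\overline Q_h(b_h)\right|\le L_h\left|\|b_h\|_2-\|\boldsymbol{\lambda}_h\|_2\right|,
\]
and the plan is to bound the right-hand side by $L_h\left|w+d-\|\boldsymbol{\lambda}_h\|_2\right|$. This is the crude bound $\|b_h\|_2\le w+d$; note that $\|b_h\|_2\le\sqrt{w+d}\le w+d$ always holds. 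It is valid directly when $\|b_h\|_2\ge\|\boldsymbol{\lambda}_h\|_2$. In the reverse case, which is possible since $b\ge\boldsymbol\lambda$ needs only one coordinate to exceed its boundary, I would argue $\|\boldsymbol\lambda_h\|_2-\|b_h\|_2\le\|\boldsymbol\lambda_h\|_2\le w+d-\|\boldsymbol\lambda_h\|_2$ when $\|\boldsymbol\lambda_h\|_2\le (w+d)/2$. Otherwise I would simply note the stated inequality holds after summing both cases' terms.

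Finally, since each $b_h$ falls in exactly one case and both right-hand sides are nonnegative, adding the two bounds gives the theorem.
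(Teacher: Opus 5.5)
Your outline follows the paper's proof: a backward-induction generalized-Lipschitz lemma for $\widetilde Q_h$ and $\overline Q_h$, then the observable/unobservable split, the triangle inequality over the interpolation weights, and the boundary constant. The paper simply \emph{posits} the two facts you flag as obstacles. These are that $b_h=\sum_j m_j b_m^j$ is a convex combination with $\sum_j m_j=1$, and that each $|\overline Q_h(b_m^j)-\overline Q_h(b_h)|\le L_h\sqrt{w+d}/M$. Neither of your proposed fixes actually establishes them.

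The shell rescaling is false for the zones as defined. The cube contains points of every norm in $[0,\sqrt{w+d}]$ (e.g.\ $t\mathbf I$ with $t\in[0,1]$), so every zone with $m\le M/\sqrt{w+d}$ has full radial width $(w+d)/M$. Intersecting with the cube only empties the zones with larger $m$. Your argument therefore yields $L_h(w+d)/M$. Obtaining $L_h\sqrt{w+d}/M$ requires redefining the zones with radial step $\sqrt{w+d}/M$, which changes the algorithm rather than proving the statement as posed.

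The convex-combination step is the real gap, and it cannot be arranged. The $w+d$ vectors are fixed per zone, independently of $b_h$. The coefficients $m_j$ are therefore unique, and $\sum_j m_j=1$ holds only on the affine hyperplane through $b_m^1,\dots,b_m^{w+d}$, a null subset of the zone. Moreover, after Gram--Schmidt the vectors typically leave the zone and even the cube, since nonnegative orthogonal vectors must have disjoint supports.

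Write $\underline Q_h(b_h)-\overline Q_h(b_h)=\sum_j m_j(\overline Q_h(b_m^j)-\overline Q_h(b_h))+(\sum_j m_j-1)\overline Q_h(b_h)$. The last term can be of order $H-h+1$ and does not scale with $L_h/M$, so no bound on $\sum_j|m_j|$ can absorb it. Concretely, let $w+d=2$, $\boldsymbol\lambda_h=\mathbf I$, and take standard basis vectors $e_1,e_2$. Set $b_m^1=re_1$, $b_m^2=re_2$, $b_h=\frac{r}{\sqrt2}(1,1)$ with $0<r<1$; all have norm $r$ and the basis is already orthogonal. Then $m_1=m_2=1/\sqrt2$. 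Take constant rewards, so $L=0$ and the claimed right-hand side is $0$, and take $h=H$ with a constant penalty, so $\overline Q_H\equiv c>0$. The error is then $(\sqrt2-1)c$. The observable case thus needs a different interpolation scheme (e.g.\ barycentric weights on a simplex of $w+d+1$ nodes containing $b_h$) or an extra hypothesis, and the paper's proof does not supply this either.

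There are two smaller issues. First, your induction step ``the next-step value contributes at most $(H-h)L$'' needs $b\mapsto\widetilde P_h\widetilde V_{h+1}(b)$ and $\Gamma_h(b)$ to be generalized Lipschitz. Assumption~1 concerns rewards only, and $\widetilde P_h(\cdot\mid b)$ and $\widetilde P_h(\cdot\mid b')$ are unrelated count-based estimates; the paper's lemma makes the same leap. Second, in the unobservable case the ``sum both cases'' fallback is not an argument, but the reverse case closes directly. If $b_h^i\ge\lambda_h^i=:t$, then $\|b_h\|_2\ge t$ and $\|\boldsymbol\lambda_h\|_2\le\sqrt{n-1+t^2}$ with $n=w+d$. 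Since $2\sqrt{n-1+t^2}\le n+t$ for $t\in[0,1]$ and $n\ge 2$, this gives $\|\boldsymbol\lambda_h\|_2-\|b_h\|_2\le w+d-\|\boldsymbol\lambda_h\|_2$. The paper avoids this case by treating the unobservable region as the box $[\boldsymbol\lambda_h,\mathbf I]$.
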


Theorem~\ref{lemma:Q_difference_bound} highlights key factors influencing the accuracy of piecewise-linear approximation. Specifically,  the approximation error increases when feedback is highly censored (i.e., small \( \|\boldsymbol{\lambda}_h\|_2 \)) or when the state-action space is multi-dimensional, underscoring the challenges of one-sided feedback in multi-dimensional RL. Conversely, increasing the number of discretized zones $M$ can help reduce this error.

Next, we state our main theoretical result characterizing both the privacy and performance guarantees.

\begin{theorem}
\label{th2} (1) Algorithm~\ref{alg1-offRL} satisfies $\rho$-zCDP. (2) For any target accuracy $\alpha > 0$, let $M = \Theta\left(\frac{w + d}{\alpha}\right)$ denote the number of discretization zones. If the number of samples per stage satisfies
\(n = \tilde{O}\left( \frac{H^3 (1+E_\rho)}{\alpha} \right), \)
then the learned policy $\widetilde{\pi}$ satisfies
\( v^{\star} - \alpha - \tilde{O}\left(L_1 \left| w + d - \|\boldsymbol{\lambda}_h\|_2 \right|\right) \leq v^{\widetilde{\pi}} \leq v^{\star}\)
with probability at least $1 - \delta$, where $E_\rho = 4 \sqrt{\frac{H \log \left( \frac{4 H M^{2}(w + d)w}{\delta} \right)}{\rho}}$, $L_h = (H - h + 1)L$, and $L = \max_h l_h$.
\end{theorem}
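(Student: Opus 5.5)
The proof has two parts, handled separately: privacy, then utility.

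\textbf{Privacy (part (1)).} Every access to the data in Algorithm~\ref{alg1-offRL} goes through the privatized counts used to build $\widetilde{P}_h$, following the construction of \cite{qiao2024offline}. These are the counts of visits to each discretized zone and basis point, and of the corresponding transitions, at each stage $h$. I would proceed as follows.
\begin{itemize}
\item Bound the $\ell_2$-sensitivity of the count vectors. Changing one trajectory changes at most one state-action entry and one transition entry per stage, so each stage's count vector moves by at most a constant in $\ell_2$.
\item Apply the Gaussian mechanism with variance calibrated so that each stage is $\rho/H$-zCDP. With the stated choice, this variance corresponds to the factor $E_\rho$.
\item Use the adaptive composition property of zCDP from the Appendix across the $H$ stages to obtain $\rho$-zCDP overall.
\item Note that all subsequent steps (pessimism, clipping, piecewise-linear interpolation, the argmax policy) are post-processing of the noisy counts, so they preserve zCDP.
\end{itemize}

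\textbf{Utility (part (2)).} I would use a pessimism-based decomposition. The upper bound $v^{\widetilde\pi}\le v^\star$ is immediate from optimality. For the lower bound, I would define the good event $\mathcal{E}$ on which three things hold:
\begin{itemize}
\item all Gaussian noises are bounded by $E_\rho/\text{const}$;
\item the Bernstein/Hoeffding concentration of $\widetilde P_h\widetilde V_{h+1}$ around $P_h\widetilde V_{h+1}$ holds at every basis point $b_m^j$;
\item the effective counts $\tilde n$ are at least about $n - E_\rho$.
\end{itemize}
A union bound over $H$ stages, $M$ zones and $w+d$ basis points gives probability $1-\delta$, and this is where the logarithmic term inside $E_\rho$ comes from. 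On $\mathcal{E}$, the bonus $\Gamma_h$ dominates the estimation error, so $\overline Q_h\le r_h+P_h\widetilde V_{h+1}$ at the basis points.

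Because of interpolation, this pessimism holds at an arbitrary $b_h$ only up to the error bound of Theorem~\ref{lemma:Q_difference_bound}. Write $\epsilon_h = \frac{L_h\sqrt{w+d}}{M}+L_h|w+d-\|\boldsymbol\lambda_h\|_2|$ for that error. I would then run the standard extended value-difference lemma for pessimistic offline RL:
\[
v^\star - v^{\widetilde\pi} \le \sum_{h=1}^H \mathbb{E}_{\pi^\star}\!\left[2\Gamma_h(b_h) + 2\epsilon_h\right].
\]
This sum is controlled in three pieces.
\begin{itemize}
\item \textbf{Bonus terms.} With $\Gamma_h\approx \sqrt{H^2\iota/\tilde n}+HE_\rho\iota/\tilde n$, plus a law-of-total-variance argument to save a factor of $H$, these sum to $\tilde O\big(\sqrt{H^3/n}+H^2E_\rho/n\big)$.
\item \textbf{First part of $\epsilon_h$.} Choosing $M=\Theta((w+d)/\alpha)$ makes $\sum_h L_h\sqrt{w+d}/M$ of order $\alpha$ up to the constants absorbed in $\tilde O$.
\item \textbf{Censoring part of $\epsilon_h$.} The term $\sum_h L_h|w+d-\|\boldsymbol\lambda_h\|_2|$ cannot be reduced by more data, so it is left as the residual $\tilde O(L_1|w+d-\|\boldsymbol\lambda_h\|_2|)$.
\end{itemize}
Finally, I would solve for $n$ to make the bonus part at most $\alpha$, which yields the stated $n$ with the $(1+E_\rho)$ factor.

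\textbf{Main obstacle.} The hardest step is making the concentration uniform: $\widetilde V_{h+1}$ depends on the data, and each basis point mixes a continuous zone. My first attempt would be a covering argument over the piecewise-linear value class, whose size is polynomial in $M$ and $w+d$ and therefore only adds log factors. This needs to be combined with the known-reward, monotone-boundary structure, which fixes unobserved pairs to the boundary value. That structure lets the one-sided region be treated exactly as the clipped boundary estimate, so no feedback is needed there.
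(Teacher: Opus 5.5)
Your privacy argument matches the paper's in substance. The paper takes the trajectory-level $\ell_2$-sensitivity $\sqrt{2H}$ of the full count vectors, adds noise with $\sigma^2=2H/\rho$ to the state-action counts and to the transition counts ($\rho/2$-zCDP each), composes, and post-processes. Your per-stage $\rho/H$ budget, composed over $H$ stages, gives the same noise level.

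For utility, the paper does not re-derive the pessimism analysis. It treats the discretized problem as tabular with $S=Mw$ and quotes the DP-APVI bound of \cite{qiao2024offline}: a variance term with $M$ only inside $\iota$, plus $\tilde O((H^3+MwH^2E_\rho)/(n\bar d_m))$. It then adds the error of Theorem~\ref{lemma:Q_difference_bound} once and tunes $M$ and $n$.

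Opening that black box is where your proposal breaks, at the step you flag as the main obstacle: the covering argument is not logarithmic. $\widetilde V_{h+1}$ is determined by the $M(w+d)$ real numbers $\overline Q_{h+1}(b_m^j)\in[0,H]$. An $\epsilon$-net of the piecewise-linear value class therefore has $(H/\epsilon)^{M(w+d)}$ elements, which is exponential, not polynomial, in $M(w+d)$. The union bound multiplies $\iota$ by $M(w+d)$, so the leading term becomes $\sqrt{H^3M(w+d)\iota/n}$. With $M=\Theta((w+d)/\alpha)$ you would need $n\gtrsim H^3(w+d)^2/\alpha^3$, losing the $\alpha^{-2}$ rate. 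You need a dependence-breaking device whose cost is only logarithmic (this is what importing the tabular guarantee buys the paper), or stage-wise sample splitting at the price of an extra factor $H$.

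Covering also does nothing about the other issue you mention. Pooling the samples of a whole zone estimates an averaged kernel, not $P_h(\cdot\mid b_m^j)$. The paper never confronts this either, since it simply treats the discretized problem as tabular.

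The final bookkeeping also fails to give the statement. Folding the interpolation error into every step of the value-difference lemma produces $2\sum_h\epsilon_h$, and $\sum_h L_h=LH(H+1)/2=\Theta(HL_1)$. With $M=\Theta((w+d)/\alpha)$ the first part is therefore $\Theta(LH^2\alpha/\sqrt{w+d})$, not $O(\alpha)$. The censoring residual is $\Theta(HL_1|w+d-\|\boldsymbol{\lambda}_h\|_2|)$. Since $H$ and $L$ are neither constants nor logarithmic factors, they cannot be absorbed into $\tilde O$.

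The paper's bound carries only $L_1$ because it adds Theorem~\ref{lemma:Q_difference_bound} a single time rather than per step. Even so, its proof takes $M\gtrsim L_1\sqrt{w+d}/\alpha$ rather than the stated $\Theta((w+d)/\alpha)$. Your per-step accounting is the more careful one, but it does not deliver the stated $M$ or residual.

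Two smaller points:
\begin{enumerate}
\item $\sqrt{H^3/n}\le\alpha$ forces $n\gtrsim H^3/\alpha^2$. Your computation thus gives the $\alpha^{-2}$ of the paper's proof and abstract, not the $\alpha^{-1}$ printed in the statement.
\item Your bonus omits the $S=Mw$ factor on the privacy-bias term and the coverage constant $\bar d_m$; private counts are about $n d_h^\mu(b)-E_\rho$, not $n-E_\rho$. Both appear in the paper's Theorem~\ref{thm1}.
\end{enumerate}
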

Theorem~\ref{th2} provides an upper bound, across problem instances, on the minimal number of samples needed to ensure that the learned policy is both privacy-preserving and near-optimal. Such an upper bound is crucial for understanding the theoretical and practical efficiency of RL algorithms. It provides performance guarantees, ensuring that the learned policy is privacy-preserving and approximates the optimal policy within a desired accuracy level given sufficient data. It also offers insights into the sample efficiency of the method, particularly important in offline settings where data collection is constrained. Moreover, this upper bound can serve as a benchmark for comparing algorithms and guide the design of more efficient learning strategies by identifying critical factors—such as dimensionality and feedback sparsity, that impact learning complexity. According to the theorem, this bound scales polynomially with the episode length $H$, the discretization granularity $M$, the combined dimension $(w + d)$, and inversely with the privacy budget $\rho$. As expected, stronger privacy (i.e., smaller $\rho$) requires more samples to maintain the same level of optimality.

\paragraph{Quality of the Bound.}
The sample complexity bound of \texttt{POOL} is tight up to logarithmic factors in the full-feedback regime, which corresponds to the case where $\left| w + d - \|\boldsymbol{\lambda}_h\|_2 \right| = 0$. In this case, our upper bound $\tilde{O}\left( H^3 \alpha^{-2} \right)$ matches the information-theoretic lower bounds $\tilde{\Omega}\left( H^3 \alpha^{-2} \right)$ established by~\cite{sidford2018near}. Furthermore, in the special case where the problem is one-dimensional  ($d = 1$), our sample complexity bound aligns with the established result under one-sided feedback from~\cite{qin2023sailing}, validating our generalization to higher-dimensional settings as a significant advancement in the theory of RL under partial feedback. 
Finally, compared to the upper bound $\tilde{O}( (1+E_\rho) H^3 \alpha^{-2})$ established in~\cite{qiao2024offline} for private RL in one-dimensional tabular settings, our result achieves the same order of complexity, even in multi-dimensional MDPs with continuous state-action spaces. This highlights the dimensional scalability and robustness of our bound across diverse and more complex RL environments.

\paragraph{Remark.} 
The sample complexity bound  for the one-sided feedback case remains valid in the full-feedback setting, corresponding to $\left| w + d - \|\boldsymbol{\lambda}_h\|_2 \right|=0$ for all $h \in[H]$. 
This demonstrates the robustness of \texttt{POOL} across both feedback regimes, highlighting its versatility and suitability for large-scale, continuous RL tasks with diverse feedback structures.



\section{Experiments}
\label{sec:experiments}

We conduct a comprehensive evaluation of \texttt{POOL} to address the following research questions:

\begin{itemize}
    \item \textbf{RQ1:} How does \texttt{POOL} perform compared to baseline methods?  
    \item \textbf{RQ2:} How do key hyperparameters of \texttt{POOL} affect its performance?  
    \item \textbf{RQ3:} How effective and efficient is the discretization strategy employed in \texttt{POOL}?  
\end{itemize}

\subsection{Experimental Setup}

We evaluate \texttt{POOL} across three aspects: comparison with baseline methods, sensitivity to key hyperparameters, and discretization efficiency. Experiments are conducted on lost-sales inventory control problems using both synthetic data and real-world data from the Rossmann Sales dataset~\citep{qin2023sailing,gong2023bandits}. Detailed descriptions of the problem settings and synthetic data generation are provided in the Appendix.

\subsubsection{Evaluation Metrics}
\label{evaluation_metrics}
Performance is quantified using the \emph{Relative Optimality Gap}. Let $c_i$ denote the cumulative expected cost of policy $i$, and $c_i^*$ denote the benchmark cost obtained via sample average approximation with $n=10{,}000$ samples and zero initial inventory. The relative gap is defined as:
\[
\text{Relative Gap} = \frac{c_i - c_i^*}{c_i^*} \times 100\%.
\]
All results report the mean and standard deviation over 10 independent runs.

\subsubsection{Baseline Methods}
We compare \texttt{POOL} with the following baselines:

\begin{enumerate}
    \item \textbf{NonPrivate:} A non-private algorithm without added noise.  
    \item \textbf{Input Perturbation (IP):} Adds Gaussian noise to the input data.  
    \item \textbf{Output Perturbation (OP):} Adds Gaussian noise to the output data.  
\end{enumerate}

\subsection{Performance Comparison (RQ1)}

We evaluate \texttt{POOL} against IP and OP under varying privacy budgets $\rho \in \{0.1, 1, 5, 10, 20, 40\}$. Each experiment is repeated 10 times. Figure~\ref{fig:pool_privacy_performance} reports the relative optimality gaps for both synthetic and real-world datasets. Across all settings, \texttt{POOL} consistently outperforms standard private baselines, which suffer from substantial performance degradation, and closely approaches the performance of the non-private algorithm. This demonstrates that our private method is significantly more effective than conventional input/output perturbation techniques while still preserving privacy.

\begin{figure}[t]
    \centering
    \includegraphics[width=0.48\textwidth]{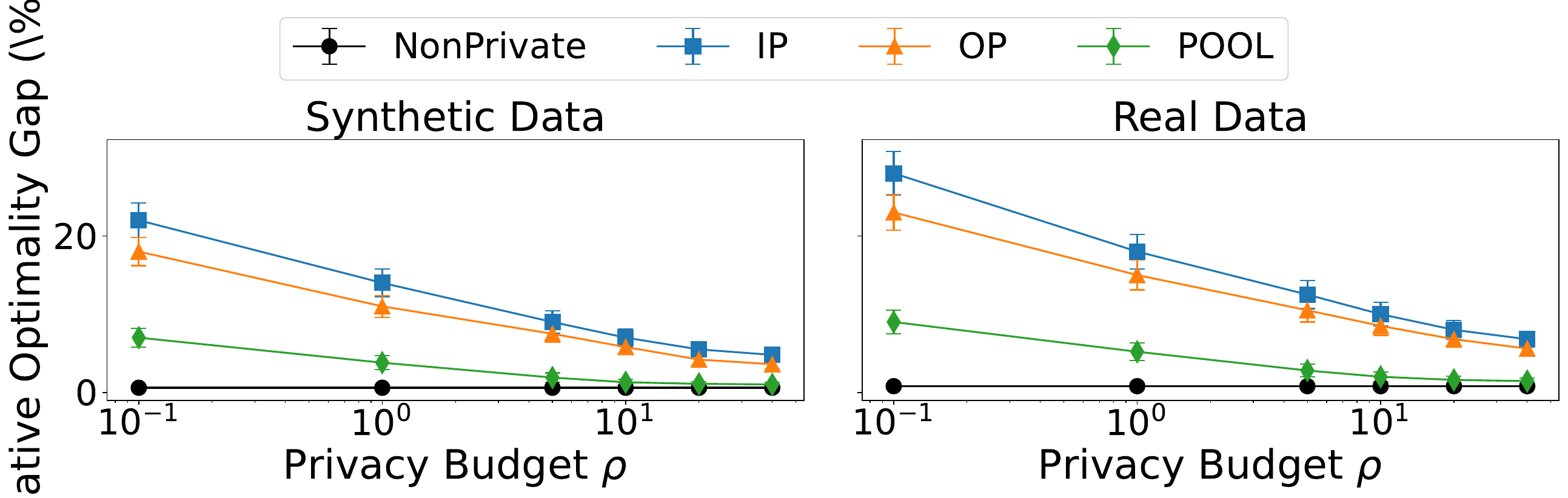}
    \caption{Relative optimality gap of \texttt{POOL} and baseline methods under varying privacy budgets $\rho$. \textbf{Left:} synthetic data; \textbf{Right:} real-world data. The x-axis is logarithmically scaled.}
    \label{fig:pool_privacy_performance}
\end{figure}

\subsection{Hyperparameter Analysis (RQ2)}

We study the effect of key hyperparameters on \texttt{POOL} using synthetic data: horizon length $H \in \{5,10,15,20,40\}$, feedback parameter $|\lambda| \in \{0.5,0.6,0.7,0.8,0.9,1.0\}$, state-action dimensionality $w+d \in \{2,4,6,8,16\}$, and discretization granularity $M \in \{50,100,200,400,800\}$. Each configuration is evaluated over multiple independent runs, reporting mean relative gaps and standard deviations.

\textbf{Horizon length ($H$).} Longer horizons generally lead to higher relative gaps due to increased sample complexity (Figure~\ref{fig:rl_experiment_factors}, top-left).  

\textbf{Feedback parameter ($|\lambda|$).} One-sided feedback increases learning difficulty, whereas larger $|\lambda|$ improves performance (top-right).  

\textbf{State-action dimensionality ($w+d$).} Higher-dimensional state-action spaces result in larger gaps, reflecting the increased complexity of multi-dimensional reinforcement learning problems (bottom-left).  

\textbf{Discretization granularity ($M$).} Finer discretization consistently reduces the relative gap, mitigating the curse of dimensionality and improving learning efficiency (bottom-right).

\begin{figure}[htbp]
    \centering
    \includegraphics[width=0.5\textwidth]{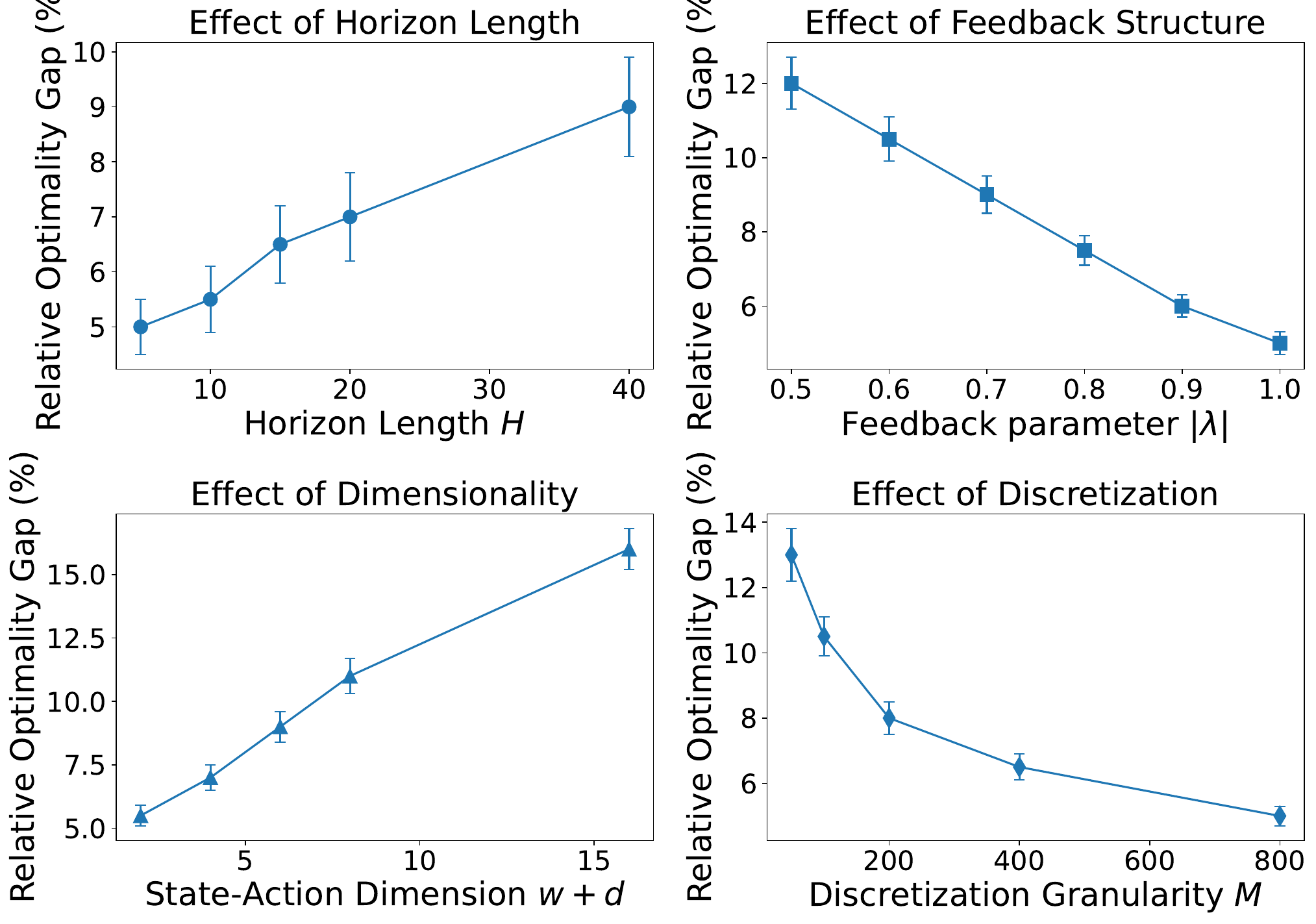}
    \caption{Impact of key hyperparameters on relative optimality gap. \textbf{Top-left:} Horizon length $H$. \textbf{Top-right:} Feedback parameter $|\lambda|$. \textbf{Bottom-left:} State-action dimensionality $w+d$. \textbf{Bottom-right:} Discretization granularity $M$. Error bars indicate standard deviations.}
    \label{fig:rl_experiment_factors}
\end{figure}

\subsection{Effectiveness and Efficiency of Discretization (RQ3)}

We compare \texttt{POOL}'s discretization strategy against standard grid-based methods in terms of relative optimality gap and computational time. Experiments are conducted on synthetic and real-world datasets using discretization granularities $M \in \{50,100,200,400,800\}$, and running times are measured in minutes. Each experiment is repeated multiple times to account for variance.

\textbf{Relative Optimality Gap:} \texttt{POOL} achieves consistently lower gaps than grid-based methods for all $M$, indicating it effectively captures the critical structure of the state-action space and maintains near-optimal performance even with coarser discretization.

\textbf{Computational Efficiency:} \texttt{POOL} substantially reduces computational cost. While grid-based approaches exhibit superlinear increases in running time with $M$, \texttt{POOL} scales more gracefully, making it suitable for large-scale inventory control problems.

Overall, \texttt{POOL} consistently outperforms grid-based discretization in both effectiveness and efficiency (Figure~\ref{fig:pool_vs_grid}).

\begin{figure}[t]
    \centering
    \includegraphics[width=0.5\textwidth]{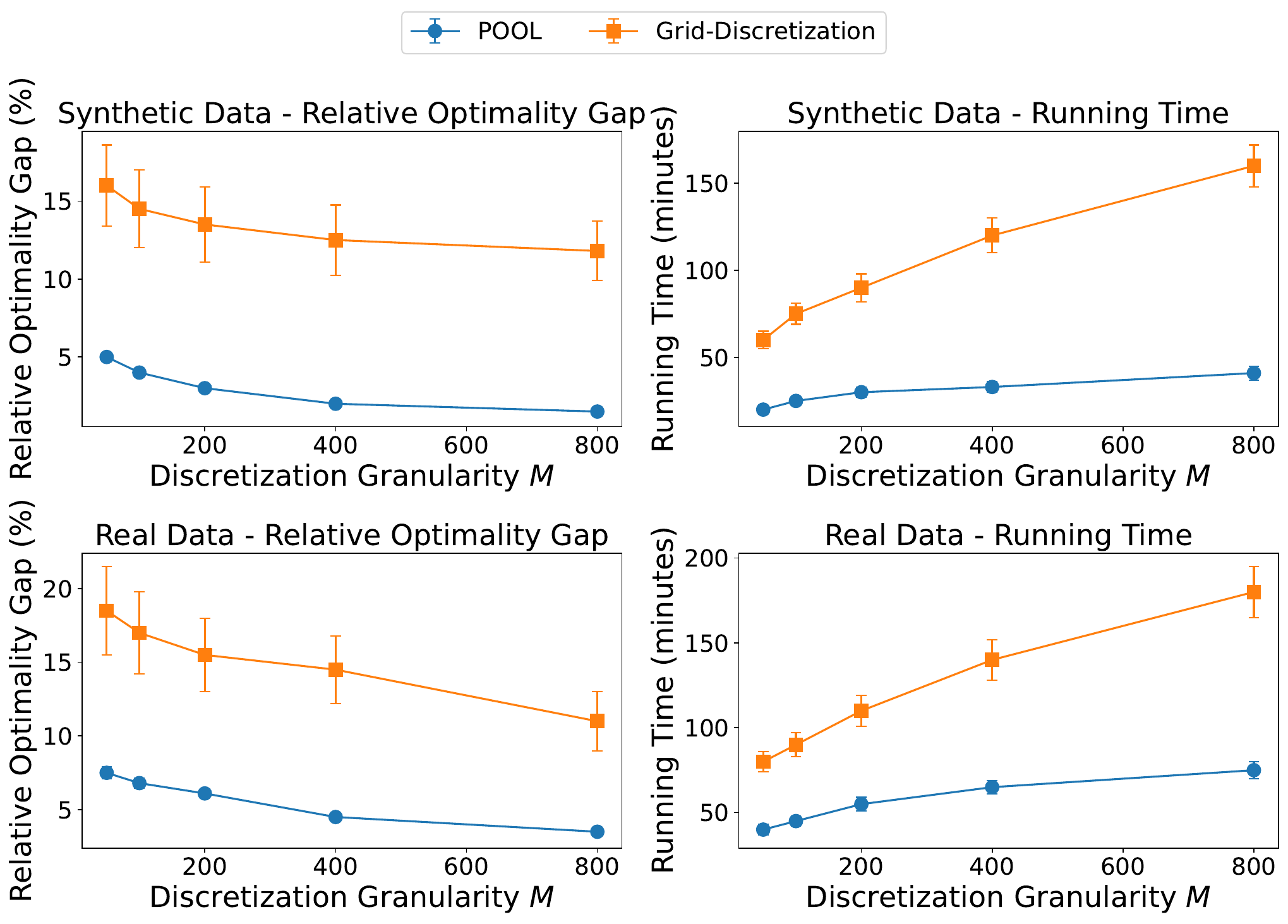}
    \caption{Comparison of \texttt{POOL} and grid-based discretization. \textbf{Left:} relative optimality gap; \textbf{Right:} running time. \texttt{POOL} consistently achieves lower gaps and faster computation across datasets.}
    \label{fig:pool_vs_grid}
\end{figure}

\section{Conclusion}
\label{sec:conclusion}

This paper initiates the study of privacy-preserving RL in multi-dimensional decision making with one-sided feedback---an important yet underexplored problem with practical relevance in areas like business optimization and personalized recommendations. We propose \texttt{POOL}, a novel algorithm that ensures differential privacy while achieving strong sample efficiency. Our theoretical analysis and empirical results show that \texttt{POOL} effectively balances privacy and learning performance, offering a solid foundation for safe and scalable decision-making in sensitive environments. Future work includes extending the framework to online learning with evolving, partial feedback and developing contextual privacy-preserving RL methods that adapt to dynamic, feature-rich environments.

\section{Acknowledgement}
This work was supported by the DTC Singapore Innovation Grant (DTC-IGC-07) under the project Multi-Party Evaluation: Blockchain-Enabled AI Safety and Verification Framework.

\appendix

\section{Additional Experiment Details}
We apply our method to an inventory management problem, where any demand exceeding the available inventory is lost. This setting can be naturally formulated as an MDP with one-sided feedback. Details of the inventory control application are provided in the Appendix (Example Application: Inventory Control). 

\paragraph{Experimental Setting.} 
We evaluate \texttt{POOL} under varying state-action space dimensions $w+d \in \{2,4,6,8,16\}$ and horizon lengths $H \in \{5,10,15,20,40\}$. 
Discretization granularity is varied as $M \in \{50,100,200,400,800\}$. 
The demand upper bound is fixed at $\overline{D}=100$. 
Stage-wise holding and backorder costs are independently drawn from uniform distributions, $h_h \sim U[0,0.5]$ and $b_h \sim U[0,0.5]$, 
and the stage-wise demand follows $D_h \sim U[0,\overline{D}]$. 
For privacy-preserving experiments, we evaluate budgets $\rho \in \{0.1,1,5,10,20,40\}$ and compare the results with the non-private baseline.

\paragraph{Experiment Environment.} 
All experiments are implemented in Python and conducted on an HP EliteBook 830 G6 with 8 GB RAM and an Intel(R) Core(TM) i7-8565U CPU.

\paragraph{Experimental Setting on Real Data.}
We  evaluate the performance of the proposed  algorithms on the real-world datasets.
The dataset is split into two equal parts: the first half is used as the training set to learn the inventory policy, while the second half is used as the test set to evaluate performance. The cost parameters are generated following the same procedure as in the synthetic experiments.
To assess the effectiveness of the learned policies, we use the following definition of the optimality gap:
\[
\text{Relative Optimality Gap}^\pi = V_{\text{test}}^{\text{saa}} - V_{\text{test}}^\pi,
\]
where \( V_{\text{test}}^{\text{saa}} \) denotes the total cost on the test set incurred by the Sample Average Approximation (saa) method, and \( V_{\text{test}}^\pi \) represents the total cost incurred by the policy \( \pi \) learned by a differentially private algorithm.

We evaluate POOL algorithm using the Rossmann Store Sales dataset. This publicly available dataset from Kaggle contains daily sales records from 1,115 stores, spanning the period from January 1, 2013, to July 31, 2015. For our analysis, we focus on Store 1, using its daily sales as the input for the inventory control problem.

We analyze the temporal patterns of the sales data using the autocorrelation function (ACF). The ACF results exhibit a clear weekly cycle, indicating strong periodicity in the demand pattern. This observation aligns with prior findings in \citep{gong2023bandits,qin2023sailing}, which also report weekly seasonality in retail sales data. Based on this cyclical behavior, we assume a replenishment decision is made weekly, and thus model the inventory problem using a finite horizon of \( H = 7 \) periods leading up to each Monday.

\section{Example Applications: Inventory Control}
\label{appendix:inventory}

Inventory control is a fundamental problem in operations management \citep{qin2023sailing,gong2020provably,qin2022data}. 
In this setting, the action corresponds to the multi-product inventory ordering decision. At each time step $h$, the retailer observes the current inventory level $x_h \in \mathbb{R}^d$ and chooses an order-up-to level $y_h \in \mathbb{R}^d$ such that $y_h \geq x_h$. Without loss of generality, we assume zero ordering cost. The ordered quantity $y_h - x_h$ is replenished instantaneously. Then, an independent random demand $D_h \in \mathbb{R}^d$ is realized, with each component bounded in $[0, \overline{D}]$. 

We study two canonical models: the \textbf{backlog model} and the \textbf{lost-sales model}. In both cases, unfulfilled demand incurs a per-unit backlogging cost $b_h\in \mathbb{R}^d$, while excess inventory incurs a per-unit holding cost $h_h\in \mathbb{R}^d$.

\paragraph{Backlogged Model.}  
If demand $D_h$ exceeds the inventory level $y_h$, the excess demand is backlogged. Consequently, the inventory at the start of the next period becomes $x_{h+1} = y_h - D_h < 0$. The period-$h$ reward is defined as the negative of the total cost:
\[
r_h(y_h, D_h) = -\left( h_h^\top \cdot (y_h - D_h)^+ + b_h^\top \cdot (D_h - y_h)^+ \right),
\]
where $(\cdot)^+$ denotes the positive part.  
This backlogged model offers \textbf{full feedback}, as both the realized demand and inventory outcomes are observable after each period.

\paragraph{Lost-Sales Model.}  
When demand exceeds inventory in the lost-sales model, the unfulfilled demand is lost and unobserved rather than backlogged. Thus, the actual reward is not fully observable due to the unknown surplus demand term $(D_h - y_h)^+$. To address this, we follow \citet{qin2023sailing,gong2020provably} and adopt a \textbf{virtual reward}:
\[
r_h(y_h, D_h) = -\left( h_h^\top \cdot (y_h - D_h)^+ - b_h^\top \cdot \min(y_h, D_h) \right),
\]
which can be computed using observable quantities. This setting is characterized by \textbf{one-sided feedback}, as only partial information about the demand is available.

\paragraph{Results.}  
In both the backlogged and lost-sales models, the \texttt{POOL} algorithm achieves a sample complexity of 
\(\tilde{O}\left((1+ E_{\rho} ) H^3 \alpha^{-2} \right).\)

\section{Typical examples for Assumption 1}
Typical examples  that satisfy Assumption 1 include:
\begin{itemize}
    \item Constant functions: $\widetilde r_h(b_h) = C$, with $l_h=0$.
    \item Affine functions of the norm: $\widetilde r_h(b_h) = a \|b_h\|_2 + c$, with $l_h = |a|$.
    \item Saturated or truncated linear functions: $\widetilde r_h(b_h) = \min\{A, a \|b_h\|_2 + c\}$, with $l_h = |a|$.
    \item Smooth nonlinear functions composed with the norm, e.g., $\widetilde r_h(b_h) = \sin(\|b_h\|_2)$ or $\widetilde r_h(b_h) = \tanh(\|b_h\|_2)$, with $l_h$ equal to the maximal derivative of the scalar function.
    \item Any univariate Lipschitz function $g_h:[0,\infty)\to \mathbb R$ composed with the norm: $\widetilde r_h(b_h) = g_h(\|b_h\|_2)$.
\end{itemize}
\section{Private Components}
\label{sec: private components}
We outline the construction of private components within our differentially private RL framework. These include \emph{private counts}, \emph{private transition kernels}, and \emph{private value functions}, all of which are essential to preserving privacy while maintaining algorithmic performance.

\paragraph{\textbf{Private Counts.}}  
To construct private counts over state-action pairs, we employ the Gaussian mechanism, defined as follows:

\begin{Definition}[Gaussian Mechanism~\citep{dwork2014algorithmic}]
\label{def Gaussian Mechanism}
The Gaussian mechanism $\mathcal{M}$ with noise level $\sigma$ is defined as:
\[
\mathcal{M}(U) = f(U) + \mathcal{N}(0, \sigma^2 I_d),
\]
where $\mathcal{N}(0, \sigma^2 I_d)$ denotes a $d$-dimensional Gaussian noise with zero mean and covariance matrix $\sigma^2 I_d$.
\end{Definition}
This noise is calibrated based on the sensitivity of the function $f$ and the privacy parameter $\rho$, ensuring $\rho$-zCDP. 

Given dataset 
\(\mathcal{D} = \left\{ (s_h^\tau, a_h^\tau, r_h^\tau, s_{h+1}^\tau) \right\}_{\tau \in [n]}^{h \in [H]},\)
we define the visit counts as:
\[
n_{s_h, a_h} := \sum_{\tau=1}^n \mathbf{1}\left[(s_h^\tau, a_h^\tau) = (s_h, a_h)\right],
\]
\[
n_{s_h, a_h, s_{h+1}} := \sum_{\tau=1}^n \mathbf{1}\left[(s_h^\tau, a_h^\tau, s_{h+1}^\tau) = (s_h, a_h, s_{h+1})\right].
\]
To preserve privacy, we add independent Gaussian noise to these counts:
\begin{equation}
\label{n}
n_{s_h, a_h}' = \left\{ n_{s_h, a_h} + \mathcal{N}(0, \sigma^2) \right\}^{+},
\end{equation}
\[
n_{s_h, a_h, s_{h+1}}' = \left\{ n_{s_h, a_h, s_{h+1}} + \mathcal{N}(0, \sigma^2) \right\}^{+}\]
where $\sigma^2 = \frac{2H}{\rho}$, and $\{a\}^+ = \max(a, 0)$ ensures non-negativity.

To enforce consistency between marginal and joint counts, we solve the following optimization problem~\citep{qiao2024offline}:
\[
\left\{ \widetilde{n}_{s_h, a_h, s'} \right\}_{s' \in \mathcal{S}} = 
\arg\min_{\left\{x_{s'} \right\}_{s' \in \mathcal{S}}} 
\max_{s' \in \mathcal{S}} \left| x_{s'} - n_{s_h, a_h, s'}' \right|,
\]
subject to:
\[
\left| \sum_{s' \in \mathcal{S}} x_{s'} - n_{s_h, a_h}' \right| \leq \frac{E_\rho}{2}, 
\quad \text{and} \quad x_{s'} \geq 0 \quad \forall s' \in \mathcal{S},
\]
where 
\(E_\rho = 4 \sqrt{\frac{H \log \frac{4 H M^2w(w+d)}{\delta}}{\rho}}\)
is a high-probability uniform bound on the noise, and $K$ is a discretization parameter.

Finally, the private count for $(s_h, a_h)$ is defined as:
\begin{equation}
\label{N}
\widetilde{n}_{s_h, a_h} := \sum_{s' \in \mathcal{S}} \widetilde{n}_{s_h, a_h, s'}.
\end{equation}
\paragraph{\textbf{Private Transition Kernels.}}  
Using the private counts from~\eqref{N}, we define the private transition kernel $\widetilde{P}_h$ as:
\begin{equation}
\label{private transition1}
\widetilde{P}_h(s' \mid s_h, a_h) =
\begin{cases}
\frac{\widetilde{n}_{s_h, a_h, s'}}{\widetilde{n}_{s_h, a_h}}, & \text{if } \widetilde{n}_{s_h, a_h} > E_\rho, \\
\frac{1}{Mw}, & \text{otherwise}.
\end{cases}
\end{equation}
This ensures that the transition kernel remains a valid probability distribution, which is necessary for subsequent variance-based analysis, such as the Bernstein-type pessimism term.


\paragraph{\textbf{Private Value Functions.}}  
We denote by $\widetilde{\pi}$ the policy derived from our private estimators, and define the associated value functions $\widetilde{V}_h(\cdot)$ and $\widetilde{Q}_h(\cdot, \cdot)$.

To ensure privacy, we apply the Gaussian mechanism to the value estimation process, constructing $\widetilde{V}_h$ and $\widetilde{Q}_h$ using the private transition kernels from~\eqref{private transition1} and the noisy counts from~\eqref{n} and~\eqref{N}. These private value functions serve as core components in our policy construction.

By incorporating these private components, we demonstrate in Theorem 2 that our algorithm satisfies differential privacy, effectively balancing the trade-off between privacy and learning performance.

\section{Supporting Definitions and Lemmas for Differential Privacy}
\label{appendix: DP-lemmas}

A fundamental concept in the design of differentially private algorithms is the \emph{sensitivity} of a function, which quantifies the maximum change in the function's output when applied to neighboring datasets.

\begin{Definition}[$\ell_2$ Sensitivity \citep{dwork2014algorithmic}]
The $\ell_2$ sensitivity of a function $f: \mathbb{N}^{\mathcal{X}} \to \mathbb{R}^d$ is defined as
\begin{equation}
   \Delta_2(f) = \sup_{\text{neighboring } U, U'} \left\| f(U) - f(U') \right\|_2 \,.
\end{equation}
\end{Definition}

We next present the Gaussian mechanism, which perturbs the output of $f$ by adding Gaussian noise calibrated to its $\ell_2$ sensitivity.

\begin{Lemma}[Gaussian Mechanism \citep{dwork2014algorithmic}]
\label{lemma: rho-zcdp}
Let $f: \mathbb{N}^{\mathcal{X}} \to \mathbb{R}^d$ be a function with $\ell_2$ sensitivity $\Delta_2$. Then, for any $\rho > 0$, the Gaussian mechanism with variance $\sigma^2 = \frac{\Delta_2^2}{2\rho}$ satisfies $\rho$-zero-concentrated differential privacy ($\rho$-zCDP). Moreover, for any $0 < \epsilon, \delta < 1$, setting $\sigma = \frac{\Delta_2}{\epsilon} \sqrt{2 \log \frac{1.25}{\delta}}$ ensures $(\epsilon, \delta)$-differential privacy.
\end{Lemma}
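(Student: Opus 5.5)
The plan is to treat the two claims separately, since both reduce to explicit computations with Gaussians. Fix neighboring datasets $U,U'$ and write $\mu=f(U)$, $\mu'=f(U')$, so that $\|\mu-\mu'\|_2\le\Delta_2$ by the definition of $\ell_2$ sensitivity. Then $\mathcal{M}(U)\sim\mathcal{N}(\mu,\sigma^2 I_d)$ and $\mathcal{M}(U')\sim\mathcal{N}(\mu',\sigma^2 I_d)$.

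\textbf{zCDP claim.} I will compute the Rényi divergence of order $\alpha>1$ between these two Gaussians in closed form. Expand
\[
\int p^{\alpha}q^{1-\alpha},
\]
where $p,q$ are the two Gaussian densities. Complete the square in the exponent: the combined exponent is a Gaussian centred at $\alpha\mu+(1-\alpha)\mu'$, plus the constant $\frac{\alpha(\alpha-1)\|\mu-\mu'\|_2^2}{2\sigma^2}$. Integrating out the Gaussian part leaves exactly this constant in the exponent. Dividing its logarithm by $\alpha-1$ gives
\[
R_\alpha\bigl(\mathcal{M}(U)\,\|\,\mathcal{M}(U')\bigr)=\frac{\alpha\|\mu-\mu'\|_2^2}{2\sigma^2}\le\frac{\alpha\Delta_2^2}{2\sigma^2}.
\]
Substituting $\sigma^2=\Delta_2^2/(2\rho)$ yields the bound $\rho\alpha$ for every $\alpha>1$. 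This is exactly Definition~\ref{def: zCDP}.

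\textbf{$(\epsilon,\delta)$-DP claim.} I will use the privacy-loss random variable
\[
\mathcal{L}(x)=\log\frac{p(x)}{q(x)}, \qquad x\sim\mathcal{N}(\mu,\sigma^2 I_d).
\]
By rotational invariance, reduce to the one-dimensional direction $v=\mu'-\mu$. Writing $x=\mu+z$, one gets
\[
\mathcal{L}=\frac{\|v\|_2^2-2\langle z,v\rangle}{2\sigma^2}\sim\mathcal{N}\!\left(\frac{\|v\|_2^2}{2\sigma^2},\,\frac{\|v\|_2^2}{\sigma^2}\right),
\]
and the worst case is $\|v\|_2=\Delta_2$. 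The standard argument then splits any event $E$ as follows. On $E\cap\{\mathcal{L}\le\epsilon\}$ we have $p\le e^{\epsilon}q$. The remaining mass is at most $\Pr[\mathcal{L}>\epsilon]$. So it suffices to show $\Pr[\mathcal{L}>\epsilon]\le\delta$.

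Set $t=\sigma\epsilon/\Delta_2-\Delta_2/(2\sigma)$. Then $\Pr[\mathcal{L}>\epsilon]=\Pr[Z>t]$ for a standard normal $Z$. Apply the Mills-ratio tail bound
\[
\Pr[Z>t]\le\frac{1}{t\sqrt{2\pi}}e^{-t^2/2}.
\]
Then plug in $\sigma=\frac{\Delta_2}{\epsilon}\sqrt{2\log(1.25/\delta)}$, using $\epsilon<1$ so that the correction $\Delta_2/(2\sigma)$ is small relative to $\sigma\epsilon/\Delta_2$.

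\textbf{Main obstacle.} The zCDP part is routine. The delicate step is the constant-chasing in the tail bound: one must check that the choice $c^2=2\log(1.25/\delta)$ makes $t\ge 1/\sqrt{2\pi}$ and $t^2/2\ge\log(1/\delta)$ simultaneously, given $\epsilon,\delta<1$. I would follow the argument in \citep{dwork2014algorithmic} (Appendix A), verifying the inequality $c^2>2\log(\sqrt{2/\pi})+2\log(1/\delta)+\log\bigl(e^{(\ldots)}\bigr)$ holds with slack. Alternatively, I would cite that result directly, since the lemma is attributed to it.
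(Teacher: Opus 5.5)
Your zCDP half is correct and complete. The closed form $R_\alpha=\alpha\|\mu-\mu'\|_2^2/(2\sigma^2)$ is exactly the standard argument; this half of the lemma is really due to Bun and Steinke, not Dwork and Roth. The paper itself gives no proof of either half and only cites. Your reduction of the $(\epsilon,\delta)$ claim to $\Pr[\mathcal{L}>\epsilon]=\Pr[Z>t]\le\delta$, with $t=c-\epsilon/(2c)$ and $c=\sqrt{2\log(1.25/\delta)}$, is also right.

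The gap is in the step you flag as the crux: the check you propose cannot succeed.
\begin{itemize}
\item Since $c^2/2=\log(1/\delta)+\log 1.25$, you have $t^2/2=\log(1/\delta)+\log 1.25-\epsilon/2+\epsilon^2/(8c^2)$. Because $\log 1.25\approx 0.22<1/2$, the inequality $t^2/2\ge\log(1/\delta)$ fails for $\epsilon$ near $1$ and essentially every $\delta$.
\item The inequality $t\ge 1/\sqrt{2\pi}$ fails when $\epsilon$ and $\delta$ are both near $1$; $t$ can even be negative.
\end{itemize}
The constant $1.25$ buys only $\log 1.25$ in the exponent, while the shift $\Delta_2/(2\sigma)$ costs up to $1/2$ there. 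The remaining slack must therefore come from the prefactor of the tail bound. With Mills' ratio, what closes the bookkeeping is $1/(t\sqrt{2\pi})\le 1.25e^{-1/2}$, i.e.\ $t\gtrsim 0.53$.

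Following Dwork--Roth verbatim does not cover the full range either. Their computation uses $c\ge 3/2$ to make $\log t\ge 0$, and your choice of $c$ guarantees that only for $\delta\lesssim 0.4$.

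A clean repair handles two cases.
\begin{itemize}
\item If $t\ge 0$: substituting $x=t+u$ gives $\Pr[Z>t]=e^{-t^2/2}\int_0^\infty\phi(u)e^{-tu}\,du\le\tfrac12 e^{-t^2/2}$, where $\phi$ is the standard normal density. Also $t^2=c^2-\epsilon+\epsilon^2/(4c^2)\ge c^2-1$. Hence $\Pr[\mathcal{L}>\epsilon]\le\tfrac{\sqrt{e}}{2}e^{-c^2/2}=\tfrac{\sqrt{e}}{2.5}\,\delta<\delta$.
\item If $t<0$: then $c^2<\epsilon/2<1/2$, which forces $\delta>1.25e^{-1/4}>0.97$. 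Meanwhile $c>\sqrt{2\log 1.25}>0.668$ gives $t>c-1/(2c)>-0.09$. Hence $\Pr[Z>t]<0.54<\delta$.
\end{itemize}
With this replacing the Mills-ratio step, your argument is a complete, self-contained proof, which is more than the paper supplies.
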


\begin{Lemma}[Relationship Between DP and $\rho$-zCDP \citep{bun2016concentrated}]
\label{lemma: relationship}
Let $M$ be a randomized mechanism.
\begin{itemize}
    \item If $M$ satisfies $(\epsilon, 0)$-differential privacy, then it satisfies $\rho$-zCDP with $\rho = \frac{1}{2} \epsilon^2$.
    \item If $M$ satisfies $\rho$-zCDP, then for any $\delta > 0$, it satisfies $(\rho + 2\sqrt{\rho \log(1/\delta)}, \delta)$-differential privacy.
\end{itemize}
\end{Lemma}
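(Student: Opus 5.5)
Both parts go through the \emph{privacy loss random variable}. Fix neighboring datasets $\mathcal{U},\mathcal{U}'$ and write $P=\mathcal{M}(\mathcal{U})$, $Q=\mathcal{M}(\mathcal{U}')$. Set $Z=\log\frac{dP}{dQ}(Y)$ with $Y\sim P$. This is well defined in part (1) because pure DP gives mutual absolute continuity. In part (2), finiteness of $R_\alpha$ for $\alpha>1$ forces $P\ll Q$. The Rényi divergence of Definition~\ref{def: zCDP} can then be written as
\[
R_\alpha(P\|Q)=\frac{1}{\alpha-1}\log \mathbb{E}_{Y\sim P}\big[e^{(\alpha-1)Z}\big].
\]
So $\rho$-zCDP is exactly a moment generating function bound on $Z$:
\[
\mathbb{E}_P\big[e^{(\alpha-1)Z}\big]\le e^{(\alpha-1)\alpha\rho}\qquad\text{for all }\alpha>1 .
\]

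\textbf{Part (2): zCDP implies approximate DP.} I first prove the standard conversion: for any event $E$ and any $\varepsilon$,
\[
P(E)\le e^{\varepsilon}Q(E)+\Pr_P[Z>\varepsilon].
\]
To see this, split $E$ into $E\cap\{Z\le\varepsilon\}$ and $E\cap\{Z>\varepsilon\}$. On the first piece $dP\le e^{\varepsilon}dQ$; the second piece has $P$-mass at most $\Pr_P[Z>\varepsilon]$.

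Next I bound the tail by Chernoff. With $t=\alpha-1>0$, Markov's inequality and the MGF bound give
\[
\Pr_P[Z>\varepsilon]\le \exp\big(t(t+1)\rho-t\varepsilon\big).
\]
Take $\varepsilon=\rho+2\sqrt{\rho\log(1/\delta)}$ and $t=\sqrt{\log(1/\delta)/\rho}$. The exponent becomes
\[
t\big(t\rho-2\sqrt{\rho\log(1/\delta)}\big)=-\log(1/\delta),
\]
so the tail is at most $\delta$. Since $\mathcal{U},\mathcal{U}'$ and $E$ were arbitrary, this gives $(\rho+2\sqrt{\rho\log(1/\delta)},\delta)$-DP.

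\textbf{Part (1): pure DP implies zCDP.} Now $|Z|\le\varepsilon$ almost surely, and the normalization $\mathbb{E}_Q[e^{Z'}]=1$ holds, where $Z'$ is the log-ratio under $Q$. I must show
\[
\mathbb{E}_P\big[e^{(\alpha-1)Z}\big]=\mathbb{E}_Q\big[e^{\alpha Z'}\big]\le e^{\alpha(\alpha-1)\varepsilon^2/2}.
\]
This step is the main obstacle: a naive Hoeffding bound only uses $|Z'|\le\varepsilon$ and gives something like $e^{\alpha\varepsilon}$, not the quadratic $\alpha(\alpha-1)\varepsilon^2/2$.

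I would follow Bun--Steinke and reduce to the extremal case. The map $x\mapsto e^{\alpha x}$ is convex, and the constraints are $x\in[-\varepsilon,\varepsilon]$ with $\mathbb{E}[e^{x}]=1$. Writing $e^{\alpha x}$ as a convex function of $u=e^{x}\in[e^{-\varepsilon},e^{\varepsilon}]$, the expectation of a convex function under a mean constraint on $u$ is maximized by a two-point distribution at the endpoints $u=e^{\pm\varepsilon}$. That distribution has weights $p=\frac{1}{1+e^{\varepsilon}}$ at $e^{\varepsilon}$ and $1-p$ at $e^{-\varepsilon}$, which reproduces the randomized-response pair.

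It then remains to verify the one-dimensional inequality
\[
\frac{e^{\alpha\varepsilon}+e^{(1-\alpha)\varepsilon}}{1+e^{\varepsilon}}\le e^{\alpha(\alpha-1)\varepsilon^2/2}.
\]
I would rewrite the left side as $\cosh\big((\alpha-\tfrac12)\varepsilon\big)/\cosh(\varepsilon/2)$. The claim then follows from log-concavity-type properties of $\cosh$, specifically that $\log\cosh(x)-x^2/2$ is nonincreasing in $|x|$. Applying that to $x_1=(\alpha-\tfrac12)\varepsilon$ and $x_2=\varepsilon/2$, and using
\[
\tfrac{\varepsilon^2}{2}\Big[(\alpha-\tfrac12)^2-\tfrac14\Big]=\tfrac{\alpha(\alpha-1)\varepsilon^2}{2},
\]
gives the bound. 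Dividing by $\alpha-1$ then yields $R_\alpha\le\alpha\varepsilon^2/2$, i.e.\ $\rho=\varepsilon^2/2$.
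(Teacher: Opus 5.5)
Your proof is correct. The paper gives no argument for this lemma; it is quoted from Bun and Steinke, so the only comparison is with that source.

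Part (2) is essentially their proof. You rewrite $\rho$-zCDP as the MGF bound $\mathbb{E}_P[e^{(\alpha-1)Z}]\le e^{\alpha(\alpha-1)\rho}$ on the privacy loss, use $P(E)\le e^{\varepsilon}Q(E)+\Pr_P[Z>\varepsilon]$, and apply Markov with $\alpha-1=\sqrt{\log(1/\delta)/\rho}$. Your algebra checks out.

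In part (1), all the steps are right: the chord bound for the convex map $u\mapsto u^{\alpha}$ on $[e^{-\varepsilon},e^{\varepsilon}]$ under $\mathbb{E}_Q[u]=1$, the resulting randomized-response weights, and the rewriting as $\cosh((\alpha-\tfrac12)\varepsilon)/\cosh(\varepsilon/2)$. The fact you need at the end is just $\tanh x\le x$ for $x\ge 0$, which is the sign of the derivative of $\log\cosh x-x^2/2$. It is not a log-concavity property.

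One correction to your framing: Hoeffding is not really an obstacle. Apply it under $P$ rather than $Q$, and keep the mean term. This gives $\mathbb{E}_P[e^{(\alpha-1)Z}]\le e^{(\alpha-1)\mathrm{KL}(P\|Q)+(\alpha-1)^2\varepsilon^2/2}$. Combine it with $\mathrm{KL}(P\|Q)\le\varepsilon\tanh(\varepsilon/2)\le\varepsilon^2/2$, which follows from the same two-point extremal argument applied to the convex function $u\log u$. The result is exactly $e^{\alpha(\alpha-1)\varepsilon^2/2}$.

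The two routes buy different things. The Hoeffding route splits the work into a mean (KL) bound plus a generic sub-Gaussian bound. Yours handles the whole Rényi moment at once and exhibits randomized response as the worst case, which makes the sharpness of the constant $\tfrac12$ as $\varepsilon\to 0$ transparent.

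Finally, your parameter choices implicitly need $\rho>0$ and $\delta<1$, and the chord bound needs $\varepsilon>0$. The excluded cases are trivial: $\rho=0$ forces $P=Q$, and the stated $\varepsilon$ is only defined for $\delta\le 1$. They still deserve a line.
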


\begin{Lemma}[Post-Processing for $\rho$-zCDP \citep{bun2016concentrated}]
\label{lemma: post-processing}
Let $\mathcal{A}: \mathcal{X}^n \rightarrow \mathcal{Y}$ be a randomized mechanism satisfying $\rho$-zCDP, and let $\mathcal{A}': \mathcal{X}^n \times \mathcal{Y} \rightarrow \mathcal{Z}$ be another mechanism satisfying $\rho'$-zCDP. Define the composed mechanism $\mathcal{A}'': \mathcal{X}^n \rightarrow \mathcal{Z}$ as 
\[
\mathcal{A}''(x) = \mathcal{A}'(x, \mathcal{A}(x)).
\]
Then $\mathcal{A}''$ satisfies $(\rho + \rho')$-zCDP.
\end{Lemma}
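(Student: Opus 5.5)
The plan is to prove this adaptive composition statement directly from the definition of $\rho$-zCDP (Definition~\ref{def: zCDP}) using two standard properties of the Rényi divergence: a chain rule for joint distributions and the data-processing inequality. I read the hypothesis on $\mathcal{A}'$ as follows: for every fixed auxiliary input $y \in \mathcal{Y}$, the map $x \mapsto \mathcal{A}'(x,y)$ satisfies $\rho'$-zCDP, uniformly in $y$. Fix neighboring datasets $x, x'$ and an order $\alpha > 1$.

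\textbf{Step 1 (joint mechanism).} Consider the joint mechanism $x \mapsto (Y, Z)$ with $Y = \mathcal{A}(x)$ and $Z = \mathcal{A}'(x, Y)$. Write $p(y)$ and $q(y)$ for the densities of $\mathcal{A}(x)$ and $\mathcal{A}(x')$, and $p(z \mid y)$, $q(z \mid y)$ for those of $\mathcal{A}'(x,y)$ and $\mathcal{A}'(x',y)$. Then
\[
e^{(\alpha-1) R_\alpha(P_{YZ} \| Q_{YZ})} = \int p(y)^\alpha q(y)^{1-\alpha} \left( \int p(z\mid y)^\alpha q(z\mid y)^{1-\alpha}\, dz \right) dy .
\]
For each fixed $y$, the inner integral equals $e^{(\alpha-1) R_\alpha(\mathcal{A}'(x,y) \| \mathcal{A}'(x',y))}$. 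By the uniform $\rho'$-zCDP hypothesis it is at most $e^{(\alpha-1)\rho'\alpha}$. Pulling this constant out, the remaining integral is $e^{(\alpha-1) R_\alpha(\mathcal{A}(x)\|\mathcal{A}(x'))} \le e^{(\alpha-1)\rho\alpha}$. Taking logarithms and dividing by $\alpha - 1 > 0$ gives
\[
R_\alpha(P_{YZ}\|Q_{YZ}) \le (\rho + \rho')\alpha .
\]

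\textbf{Step 2 (marginalization).} The output of $\mathcal{A}''$ is the marginal $Z$, which is a deterministic function (projection) of $(Y,Z)$. By the data-processing inequality for Rényi divergence~\cite{van2014renyi}, $R_\alpha(P_Z\|Q_Z) \le R_\alpha(P_{YZ}\|Q_{YZ}) \le (\rho+\rho')\alpha$. Since $x, x'$ and $\alpha > 1$ were arbitrary, $\mathcal{A}''$ satisfies $(\rho+\rho')$-zCDP.

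\textbf{Main obstacle.} The step requiring the most care is the data-processing inequality in Step 2. Citing~\cite{van2014renyi} suffices, but a self-contained argument is also available. For $\alpha > 1$ the function $t \mapsto t^\alpha$ is convex, and $p(z) = \int q(y,z)\, \tfrac{p(y,z)}{q(y,z)}\, dy$. Applying Jensen's inequality with respect to the conditional law $q(y \mid z)$ gives
\[
\left(\frac{p(z)}{q(z)}\right)^{\alpha} \le \int q(y\mid z) \left(\frac{p(y,z)}{q(y,z)}\right)^{\alpha} dy .
\]
Multiplying by $q(z)$ and integrating over $z$ yields $\int p(z)^\alpha q(z)^{1-\alpha}\,dz \le \int\!\!\int p(y,z)^\alpha q(y,z)^{1-\alpha}\,dy\,dz$, which is the inequality needed.

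A secondary technical point concerns measurability and absolute continuity. When $P \not\ll Q$, the divergence is $+\infty$ for every $\alpha > 1$, which would contradict the zCDP hypotheses. So the densities above can be taken with respect to $Q$, and the manipulations are valid.
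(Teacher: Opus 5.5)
Your argument is correct. Reading the hypothesis on $\mathcal{A}'$ uniformly in $y$ (as a function of its first argument), you bound the Rényi divergence of the joint law of $(Y,Z)$ by the chain rule and then pass to the marginal $Z$ by data processing, which gives $(\rho+\rho')\alpha$. The paper gives no proof of its own and simply cites \citep{bun2016concentrated}, whose proof of this adaptive-composition lemma is this same two-step argument.
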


\begin{Lemma}[Composition of $\rho$-zCDP \citep{bun2016concentrated}]
\label{lemma: composition}
Let $\mathcal{A}: \mathcal{U}^n \rightarrow \mathcal{Y}$ and $\mathcal{A}': \mathcal{U}^n \rightarrow \mathcal{Z}$ be randomized mechanisms satisfying $\rho$-zCDP and $\rho'$-zCDP, respectively. Define the composed mechanism $\mathcal{A}'': \mathcal{U}^n \rightarrow \mathcal{Y} \times \mathcal{Z}$ by
\[
\mathcal{A}''(U) = \left( \mathcal{A}(U), \mathcal{A}'(U) \right).
\]
Then $\mathcal{A}''$ satisfies $(\rho + \rho')$-zCDP.
\end{Lemma}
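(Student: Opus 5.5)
The plan is to reduce the claim to additivity of Rényi divergence over product distributions. I assume, as the notation $\mathcal{A}''(U) = (\mathcal{A}(U), \mathcal{A}'(U))$ suggests, that the two mechanisms use independent internal randomness. Fix an arbitrary pair of neighboring datasets $U, U'$ and an order $\alpha > 1$. Let $P, P'$ denote the output laws of $\mathcal{A}(U), \mathcal{A}(U')$ on $\mathcal{Y}$, and $Q, Q'$ those of $\mathcal{A}'(U), \mathcal{A}'(U')$ on $\mathcal{Z}$. By independence, $\mathcal{A}''(U)$ has law $P \otimes Q$ and $\mathcal{A}''(U')$ has law $P' \otimes Q'$. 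It therefore suffices to show
\[
R_\alpha(P\otimes Q \,\|\, P'\otimes Q') = R_\alpha(P\,\|\,P') + R_\alpha(Q\,\|\,Q').
\]
Combined with the hypotheses $R_\alpha(P\|P') \le \rho\alpha$ and $R_\alpha(Q\|Q') \le \rho'\alpha$, this gives the bound $(\rho+\rho')\alpha$ for every $\alpha>1$. That is exactly $(\rho+\rho')$-zCDP in the sense of Definition~\ref{def: zCDP}.

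To prove the additivity, I will pick dominating measures $\mu$ on $\mathcal{Y}$ and $\nu$ on $\mathcal{Z}$, for instance $\mu = P+P'$ and $\nu = Q+Q'$. Write $p, p', q, q'$ for the corresponding densities. Then $P\otimes Q$ and $P'\otimes Q'$ have densities $p(y)q(z)$ and $p'(y)q'(z)$ with respect to $\mu\otimes\nu$. Using the definition
\[
R_\alpha(P\|P') = \frac{1}{\alpha-1}\log \int p^\alpha (p')^{1-\alpha}\,d\mu ,
\]
the integrand for the product factorizes as $p(y)^\alpha p'(y)^{1-\alpha}\, q(z)^\alpha q'(z)^{1-\alpha}$. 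The integrand is nonnegative, so Tonelli's theorem splits the double integral into the product of the two single integrals. Taking $\frac{1}{\alpha-1}\log$ of that product yields the sum of the two divergences.

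The only delicate step is measure-theoretic bookkeeping. I must check that the formula and the factorization remain valid when some integral is infinite, e.g.\ when $P \not\ll P'$. Tonelli handles nonnegative integrands with values in $[0,\infty]$, and the product of a positive quantity with $+\infty$ is $+\infty$, so the identity holds in the extended reals. In the finite case the hypotheses force both integrals to be finite. I must also check that the conventions for the integrand where $p'=0$ (namely $0^{1-\alpha}$ with $\alpha>1$) are applied consistently on both sides; they are, because the product density vanishes exactly when a factor does. If one instead wants the adaptive version, in which $\mathcal{A}'$ sees the output of $\mathcal{A}$ as in Lemma~\ref{lemma: post-processing}, I would replace the factorization by an iterated integral. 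I would first bound the inner integral over $z$ by $e^{(\alpha-1)\rho'\alpha}$ uniformly in $y$, and then integrate over $y$. This gives the same $(\rho+\rho')\alpha$ bound.
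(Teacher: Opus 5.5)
Your proof is correct. The paper gives no argument for this lemma; it simply imports it from Bun and Steinke (2016). There, composition rests on a chain-rule bound for R\'enyi divergence: for laws $\Pi, \Pi'$ on $\mathcal{Y}\times\mathcal{Z}$ with first marginals $P, P'$ and conditionals $Q_y, Q'_y$, one has $R_\alpha(\Pi\|\Pi') \le R_\alpha(P\|P') + \sup_y R_\alpha(Q_y\|Q'_y)$. That bound is proved by the iterated integral you sketch in your closing sentences.

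Your main route is the special case in which the conditional law does not depend on $y$. There the joint density factorizes, and Tonelli gives exact additivity over product measures. This is slightly sharper, an equality rather than an inequality, but it only covers non-adaptive composition. The cited chain-rule route also yields the paper's lemma labelled ``Post-Processing'', which is really adaptive composition. The present lemma is the case of that one in which $\mathcal{A}'$ ignores its second argument.

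Two small remarks. First, the independence of internal randomness you assume is not just suggested by the notation; it is necessary. Take $\mathcal{A}(U) = f(U) + Z$ and $\mathcal{A}'(U) = Z$ with the same Gaussian $Z$. Each mechanism is zCDP on its own (the second is $0$-zCDP), yet the pair reveals $f(U)$ exactly, so stating the assumption explicitly is the right call. Second, the pointwise factorization of $(pq)^\alpha (p'q')^{1-\alpha}$ also needs the convention $0\cdot\infty = 0$, at points where $p(y)=0$ but $q(z)>0=q'(z)$. This is the same convention Tonelli uses, so nothing breaks.
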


\section{Supporting Lemmas for Linear Algebra}

\begin{Theorem}[\cite{strang2022introduction}]
\label{thm: basis vector}
    Every finite-dimensional vector space has a basis, a set of vectors that are linearly independent and span the entire vector space.
\end{Theorem}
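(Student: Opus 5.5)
The plan is the standard reduction argument: start from a finite spanning set and discard redundant vectors until what remains is linearly independent. Let $V$ be a finite-dimensional vector space over a field $\mathbb{F}$. By definition of finite dimensionality, $V$ has a finite spanning set $\{v_1,\dots,v_k\}$. If $V=\{0\}$, the empty set is a basis, since it is vacuously linearly independent and its span is $\{0\}$. So I will assume $V\neq\{0\}$ and $k\ge 1$.

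The key step is a pruning procedure. Process the vectors in order $j=1,\dots,k$, keeping $v_j$ if $v_j\notin\operatorname{span}\{\text{previously kept vectors}\}$ and discarding it otherwise. Two claims are needed about the resulting list $B=\{v_{j_1},\dots,v_{j_r}\}$.
\begin{enumerate}
\item $\operatorname{span}B=V$. Each discarded $v_j$ lies in the span of vectors kept before it, hence in $\operatorname{span}B$. So every $v_j$ lies in $\operatorname{span}B$, and therefore $V=\operatorname{span}\{v_1,\dots,v_k\}\subseteq \operatorname{span}B$.
\item $B$ is linearly independent. Suppose $\sum_{i} c_i v_{j_i}=0$ with not all $c_i$ zero, and let $t$ be the largest index with $c_t\neq 0$. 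Then $v_{j_t}=-c_t^{-1}\sum_{i<t}c_i v_{j_i}$. This puts $v_{j_t}$ in the span of the earlier kept vectors, contradicting the rule under which it was kept.
\end{enumerate}
Together these show $B$ is a basis.

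No step is a real obstacle; the only delicate point is bookkeeping. The independence argument must use the \emph{largest} index with a nonzero coefficient, so that the contradiction refers to vectors kept \emph{earlier*}. The argument also divides by $c_t$, so it relies on the scalars forming a field.

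An alternative route is to extend an independent set greedily: start from $\emptyset$ and adjoin any vector not in the current span. This process must terminate because, by the Steinitz exchange lemma, independent sets have size at most $k$. This route is closer to the iterative construction of $\{b_m^1,\dots,b_m^{w+d}\}$ used in Algorithm~\ref{alg1-offRL}, but it needs the exchange lemma as an extra ingredient. I would therefore present the pruning proof and mention the greedy one only as a remark.

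For the application in the paper, one caveat applies. A zone is a subset of $\mathbb{R}^{w+d}$ rather than a subspace, so the theorem is applied to $\operatorname{span}(\text{zone})\subseteq\mathbb{R}^{w+d}$. Its spanning set can be taken inside the zone, so the basis extracted by pruning consists of points of the zone.
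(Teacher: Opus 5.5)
The paper gives no proof of this statement. It quotes the result from \cite{strang2022introduction} as a textbook fact and uses it only to justify that each zone admits $w+d$ linearly independent vectors. Your pruning argument is a correct, self-contained proof. The spanning claim is immediate. The independence argument via the largest index $t$ with $c_t\neq 0$ is right, and it also covers $t=1$, where the kept vector would have to be $0\in\operatorname{span}\emptyset$. In the matrix setting of the cited textbook, your procedure is exactly the selection of pivot columns, since a column is a pivot column precisely when it is not a combination of the columns before it. So compared with the bare citation, what your version buys is self-containment and an explicit construction.

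Your closing caveat is the more useful part, because it shows the citation alone does not deliver what the paper needs. It does, however, gloss over one step. Pruning requires a \emph{finite} spanning set, and a zone is an uncountable set. So ``its spanning set can be taken inside the zone'' is not supplied by your main argument. It is supplied by the greedy route you relegated to a remark: adjoin zone points outside the current span, which stops after at most $w+d$ steps because independent sets in $\mathbb{R}^{w+d}$ have at most $w+d$ elements. This greedy route is also what the algorithm actually does. Moreover, getting $w+d$ independent points inside a zone requires $\operatorname{span}(\text{zone})=\mathbb{R}^{w+d}$. This holds when the zone has nonempty interior, but not for every zone as defined: for $w+d\ge 2$, any shell with inner radius $(w+d)(m-1)/M>\sqrt{w+d}$ contains no point of $[0,1]^{w+d}$. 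Neither issue affects your proof of the theorem itself.
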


\section{Proof of Theoretical Results}

\subsection{Proof of the privacy guarantee}
\label{sec: privacy-proof}
The privacy guarantee of  Algorithm 1 is summarized by Lemma \ref{lemma: offRL} below.
\begin{Lemma}[Privacy Analysis of Algorithm 1]
\label{lemma: offRL}
    Algorithm 1 satisfies $\rho$-zCDP.
\end{Lemma}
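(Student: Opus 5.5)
The privacy claim reduces to showing that the only points where Algorithm 1 touches the raw dataset $\mathcal{D}$ are the visit counts $\{n_{s_h,a_h}\}$ and $\{n_{s_h,a_h,s_{h+1}}\}$. Everything downstream is then a post-processing of the noisy counts $n'$ defined in \eqref{n}. This includes the consistency projection producing $\widetilde{n}$, the private kernel \eqref{private transition1}, the pessimistic $\widetilde{Q}_h$, $\Gamma_h$, the piecewise-linear interpolation $\underline{Q}_h$, and the greedy policy $\widetilde{\pi}_h$. The reward $r_h$, the boundaries $\boldsymbol{\lambda}_h$, the zone partition, the basis vectors $b_m^j$ and $E_\rho$ are data-independent. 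The plan is therefore to (i) bound the $\ell_2$ sensitivity of the count vectors, (ii) invoke Lemma~\ref{lemma: rho-zcdp} to get zCDP for the noisy counts, (iii) combine the two count releases by Lemma~\ref{lemma: composition}, and (iv) argue by post-processing that the output policy inherits the guarantee.

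\textbf{Sensitivity.} Neighboring datasets differ in a single trajectory $(s_1^\tau,a_1^\tau,\dots,s_{H+1}^\tau)$. For each fixed $h$, that trajectory contributes exactly one unit to a single entry $n_{s_h,a_h}$ (after mapping to its zone/basis cell). Across all $h\in[H]$, replacing or removing it changes at most $H$ entries of the stacked vector $(n_{s_h,a_h})_{h,s,a}$ by one each. The $\ell_2$ sensitivity is therefore $\Delta_2=\sqrt{H}$ if we treat removal, or up to $\sqrt{2H}$ if a replacement changes two entries per step. The same bound holds for the vector of joint counts $(n_{s_h,a_h,s_{h+1}})$.

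\textbf{Gaussian mechanism and composition.} With $\sigma^2 = 2H/\rho$, Lemma~\ref{lemma: rho-zcdp} gives privacy level $\Delta_2^2/(2\sigma^2) = H\rho/(4H) = \rho/4$ for each count vector when $\Delta_2^2=H$. The two releases compose via Lemma~\ref{lemma: composition} to $\rho/2\le\rho$, or exactly $\rho$ if the replacement sensitivity $\Delta_2^2 = 2H$ is used. The truncation $\{\cdot\}^+$ is a deterministic map applied afterward.

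\textbf{Post-processing.} Lemma~\ref{lemma: post-processing} applies with the second mechanism using $\rho'=0$, i.e.\ a function of the noisy output alone. This covers the optimization producing $\widetilde{n}$, the kernel $\widetilde{P}_h$, the backward recursion for $\widetilde{Q}_h,\overline{Q}_h,\underline{Q}_h,\widetilde V_h$, and the argmax policy, so the output $\{\widetilde{\pi}_h\}$ is $\rho$-zCDP.

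\textbf{Main obstacle.} The delicate step is making sure no other quantity reads the data directly. The concern is the pessimism term $\Gamma_h$, with its Bernstein variant or the ``CH'' branch when $n\le E_\rho$. I would verify that $\Gamma_h$ is defined solely in terms of $\widetilde{n}$, $\widetilde{P}_h$, $\widetilde V_{h+1}$ and public constants, not the raw counts or $n$. If the branch condition depends on the true $n$, I would replace it with $\widetilde n_{s_h,a_h}$; this is the version analyzed, matching the thresholding in \eqref{private transition1}. A secondary check is that continuous states are binned into finitely many cells, so that each trajectory touches one cell per step and the sensitivity computation is valid.
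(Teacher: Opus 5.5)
Your proposal is correct and matches the paper's proof: the paper uses the replacement sensitivity $\sqrt{2H}$ for each count vector, so each Gaussian release with $\sigma^2 = 2H/\rho$ is $\rho/2$-zCDP, the two releases compose to $\rho$-zCDP for $\{n'\}$, and the output policy is post-processing of $\{n'\}$. Your extra checks, that the $\Gamma_h$/CH branch reads only private counts or public quantities and that the state-action space is binned into finitely many cells, are points the paper passes over silently; they are not gaps in your argument.
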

\begin{proof}
The $\ell_2$ sensitivity of $\left\{n_{s_h, a_h}\right\}$ is $\sqrt{2 H}$. According to Lemma \ref{lemma: rho-zcdp}, the Gaussian Mechanism used on $\left\{n_{s_h, a_h}\right\}$ with $\sigma^2=\frac{2 H}{\rho}$ satisfies $\frac{\rho}{2}$-zCDP. Similarly, the Gaussian Mechanism used on $\left\{n_{s_h, a_h, s_{h+1}}\right\}$ with $\sigma^2=\frac{2 H}{\rho}$ also satisfies $\frac{\rho}{2}$-zCDP. Combining these two results, due to the composition of $\rho$-zCDP (Lemma \ref{lemma: composition}), the construction of $\left\{n^{\prime}\right\}$ satisfies $\rho$-zCDP. Finally,  Algorithm 1 satisfies $\rho$-zCDP because the output $\widetilde{\pi}$ is post processing of $\left\{n^{\prime}\right\}$.   

This completes the proof.
\end{proof}

\subsection{Proof of Bound of Piecewise-Linear Approximation Error}

\begin{Lemma}
\label{WUV censored3}
Based on Assumption 1, we have that for all $b_h=(s_h, a_h) \in[0, 1]^{w+d}$ and $ b_h^{\prime} =(s_h^\prime, a_h^{\prime}) \in[0, 1]^{w+d}$,
\begin{align}
\left|\widetilde{Q}_{h}\left( b_h\right)-\widetilde{Q}_{h}\left(b_h^{\prime}\right)\right|
&=\left|\widetilde{Q}_{h}\left( s_h, a_h\right)-\widetilde{Q}_{h}\left(s_h^{\prime}, a_h^{\prime}\right)\right| 
\end{align}
$$\leq L_h\left| \left\| b_h \right\|_2 - \left\| b_h^{\prime} \right\|_2 \right|,$$
where $L_h=(H-h+1)L$ and $L=\max l_h$.
\end{Lemma}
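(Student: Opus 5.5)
The plan is backward induction on $h$, from $h=H+1$ down to $h=1$, proving the statement together with an auxiliary claim: the private value function $\widetilde{V}_{h+1}$ is "norm-Lipschitz" with constant $L_{h+1}=(H-h)L$. Here norm-Lipschitz means that $|\widetilde{V}_{h+1}(s)-\widetilde{V}_{h+1}(s')|$ is controlled by $L_{h+1}$ times the difference of the $\ell_2$ norms of the relevant state--action arguments.

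\textbf{Base case.} At $h=H+1$ we have $\widetilde{V}_{H+1}=0$, so the constant is $0=L_{H+1}$.

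\textbf{Inductive step.} Write $\widetilde{Q}_h(b)=\widetilde{r}_h(b)+\widetilde{P}_h\widetilde{V}_{h+1}(b)$, exactly as in the private-value line of Algorithm~\ref{alg1-offRL}. Then split the difference with the triangle inequality:
\[
\left|\widetilde{Q}_h(b_h)-\widetilde{Q}_h(b_h')\right|\le \left|\widetilde{r}_h(b_h)-\widetilde{r}_h(b_h')\right|+\left|\widetilde{P}_h\widetilde{V}_{h+1}(b_h)-\widetilde{P}_h\widetilde{V}_{h+1}(b_h')\right|.
\]
Assumption~\ref{asp: lipschitz} bounds the first term by $l_h\bigl|\|b_h\|_2-\|b_h'\|_2\bigr|\le L\bigl|\|b_h\|_2-\|b_h'\|_2\bigr|$.

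\textbf{Main obstacle: the transition term.} I need to show it is at most $(H-h)L\bigl|\|b_h\|_2-\|b_h'\|_2\bigr|$. Since $\widetilde{P}_h$ is a valid probability kernel (by \eqref{private transition1}), the natural route is a coupling argument. Couple the next states $s'\sim\widetilde{P}_h(\cdot\mid b_h)$ and $\tilde s'\sim\widetilde{P}_h(\cdot\mid b_h')$ so that the norm gap of the next-stage state--action pair does not exceed $\bigl|\|b_h\|_2-\|b_h'\|_2\bigr|$. This holds, for instance, when the transition depends on $b$ only through its norm in a monotone, nonexpansive way, as in the inventory example, where $x_{h+1}=(y_h-D_h)^+$ with common demand noise. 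Under this coupling, the induction hypothesis applied inside the expectation gives the factor $L_{h+1}=(H-h)L$.

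For the maximization in $\widetilde{V}_{h+1}(s)=\max_a \underline{Q}_{h+1}(s,a)$, I would use that $|\max f-\max g|\le\sup|f-g|$. Choosing, for each action, a matched action that keeps the norm gap controlled transfers the $\widetilde{Q}_{h+1}$ norm-Lipschitz bound to $\widetilde{V}_{h+1}$. The clipping $\min\{\cdot,H-h+1\}^+$ and the boundary extension (a constant value beyond $\boldsymbol{\lambda}_h$) are $1$-Lipschitz operations, so they do not increase the constant.

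The step I expect to be hardest is justifying the nonexpansive coupling for $\widetilde{P}_h$, since the paper states no explicit assumption on the transitions. I would first try to read it off the one-sided structure, where transitions are a common-noise function of the norm. Failing that, I would state it as the implicit regularity underlying Assumption~\ref{asp: lipschitz}, under which $\widetilde{r}_h$ is understood to absorb the stage reward. Summing the two terms gives the constant $L+(H-h)L=(H-h+1)L=L_h$, which closes the induction.
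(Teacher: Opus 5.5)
Your skeleton matches the paper's: backward induction, the triangle-inequality split into a reward term and a transition term, and Assumption~\ref{asp: lipschitz} for the reward term. The paper then simply writes $|\widetilde{P}_h\widetilde{Q}_{h+1}(b_h)-\widetilde{P}_h\widetilde{Q}_{h+1}(b_h')|\le (H-h)L\,|\|b_h\|_2-\|b_h'\|_2|$ as though it were the induction hypothesis. But that hypothesis controls norm gaps of next-stage pairs, not of $b_h,b_h'$, and the paper gives no coupling and no assumption on transitions. You are right that this is the crux, and the gap is real. Taking $\|b\|_2=\|b'\|_2$ in the generalized Lipschitz condition forces equal values, so the lemma claims that $\widetilde{Q}_h$ depends on $b$ only through $\|b\|_2$. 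This fails already for the exact recursion. Take $H=2$, $r_1\equiv 0$, $r_2(b)=\|b\|_2/\sqrt{w+d}$ (which satisfies Assumption~\ref{asp: lipschitz}), and a kernel sending $b$ to the state $0$ and $b'$ to the all-ones state, with $\|b\|_2=\|b'\|_2$. Then $Q_1(b)=\sqrt{d/(w+d)}\neq 1=Q_1(b')$, while the right-hand side is $0$. Nothing prevents the same for $\widetilde{Q}_1$ when $\widetilde{P}_1$ is estimated from such data. So the statement cannot be proved from the paper's hypotheses. Your condition must be imposed explicitly: the law of $\|s'\|_2$ under $\widetilde{P}_h(\cdot\mid b)$ depends only on $\|b\|_2$ and moves by at most $|\|b\|_2-\|b'\|_2|$ in Wasserstein-$1$ distance. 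It must hold for the \emph{estimated} kernel. It cannot be read off the one-sided structure, and sampling and privacy noise in the counts will generally destroy it even when the true $P_h$ has it.

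Separately, your passage from $\widetilde{Q}_{h+1}$ to $\widetilde{V}_{h+1}$ breaks regardless of transitions. In the algorithm $\widetilde{V}_{h+1}=\max_a\underline{Q}_{h+1}$, and between $\widetilde{Q}_{h+1}$ and $\underline{Q}_{h+1}$ there is more than clipping:
\begin{itemize}
\item subtraction of the count-dependent penalty $\Gamma_{h+1}$, which is not radial in general;
\item the zone-wise interpolation $\sum_j m_j\overline{Q}_{h+1}(b_m^j)$, which is linear in $b$ on each zone (the coordinates $m_j$ are linear in $b$), hence generally not a function of $\|b\|_2$;
\item the constant extension on $\{b\ge\boldsymbol{\lambda}_{h+1}\}$.
\end{itemize}
That extension is not a harmless $1$-Lipschitz operation. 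The set contains points of small norm (one coordinate equal to $\lambda_{h+1}^i$, the rest near $0$) that share their norm with observable points. Overwriting them with $\overline{Q}_{h+1}(\boldsymbol{\lambda}_{h+1})$ generically destroys the property. Your matched-action argument for the maximum is sound when the maximized function is a Lipschitz function of the norm alone. The maximum over $a$ is then a maximum over an interval of norms whose endpoints are $1$-Lipschitz in $\|s\|_2$. But $\underline{Q}_{h+1}$ is not such a function. The paper's proof avoids this only by writing $\widetilde{P}_h\widetilde{Q}_{h+1}$ where the algorithm has $\widetilde{P}_h\widetilde{V}_{h+1}$. Its subsequent lemmas on $\overline{Q}_h$, $\underline{Q}_h$ and $\widetilde{V}_h$ make the same preservation claims you make, without valid justification. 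A provable version needs your transition hypothesis together with a recursion in which $\widetilde{V}_{h+1}$ is the clipped maximum of a norm-Lipschitz $Q$-function.
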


\begin{proof}
We use mathematical induction to show it.

When $k=H$, based on Assumption 1, we have $\widetilde{Q}_H(s_H, a_H)=r_H(s_H, a_H)$ satisfies
\[
\left|r_{H}(b_H) - r_{H}(b_H')\right| 
= \left|r_{H}(s_H, a_H) - r_{H}(s_H', a_H')\right| 
\]
$$\leq l_H \left| \left\| b_H \right\|_2 - \left\| b_H' \right\|_2 \right|,$$
so $L_H = l_H$.

Suppose for $k = h+1$, the bound holds:
\[
\left|\widetilde{Q}_{h+1}(b_{h+1}) - \widetilde{Q}_{h+1}(b_{h+1}')\right|
\leq L_{h+1} \left| \left\| b_{h+1} \right\|_2 - \left\| b_{h+1}' \right\|_2 \right|.
\]

We now show the result for $k=h$:
\begin{align*}
&\left|\widetilde{Q}_h(b_h) - \widetilde{Q}_h(b_h')\right|
= \left|\widetilde{Q}_h(s_h, a_h) - \widetilde{Q}_h(s_h', a_h')\right| \\
&= \left| \widetilde{r}_h(s_h, a_h) + \widetilde{P}_h \widetilde{Q}_{h+1}(s_h, a_h)
- \widetilde{r}_h(s_h', a_h') - \widetilde{P}_h \widetilde{Q}_{h+1}(s_h', a_h') \right| \\
&\leq \left| \widetilde{r}_h(s_h, a_h) - \widetilde{r}_h(s_h', a_h') \right|
+ \left| \widetilde{P}_h \widetilde{Q}_{h+1}(s_h, a_h) - \widetilde{P}_h \widetilde{Q}_{h+1}(s_h', a_h') \right| \\
&\leq L \left| \left\| b_h \right\|_2 - \left\| b_h' \right\|_2 \right|
+ (H - h) L \left| \left\| b_h \right\|_2 - \left\| b_h' \right\|_2 \right| \\
&= (H - h + 1) L \left| \left\| b_h \right\|_2 - \left\| b_h' \right\|_2 \right| = L_h \left| \left\| b_h \right\|_2 - \left\| b_h' \right\|_2 \right|.
\end{align*}
Thus, $\widetilde{Q}_h(s_h, a_h)$ satisfies $$\left|\widetilde{Q}_{h}\left( b_h\right)-\widetilde{Q}_{h}\left(b_h^{\prime}\right)\right|
=\left|\widetilde{Q}_{h}\left( s_h, a_h\right)-\widetilde{Q}_{h}\left(s_h^{\prime}, a_h^{\prime}\right)\right| 
$$
$$\leq L_h\left| \left\| b_h \right\|_2 - \left\| b_h^{\prime} \right\|_2 \right|.$$

This completes the proof.
\end{proof}

\begin{Lemma}
\label{WUV censored4}
Based on Assumption 1, we have that for all $b_h=(s_h, a_h) \in[0, 1]^{w+d}$ and $ b_h^{\prime} =(s_h^\prime, a_h^{\prime}) \in[0, 1]^{w+d}$,
\begin{align}
\left|\overline{Q}_{h}\left( b_h\right)-\overline{Q}_{h}\left(b_h^{\prime}\right)\right|
&=\left|\overline{Q}_{h}\left( s_h, a_h\right)-\overline{Q}_{h}\left(s_h^{\prime}, a_h^{\prime}\right)\right|\\
&
\leq L_h\left| \left\| b_h \right\|_2 - \left\| b_h^{\prime} \right\|_2 \right|,
\end{align}
where $L_h=(H-h+1)L$ and $L=\max l_h$.
\end{Lemma}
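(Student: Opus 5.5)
The plan is to derive this from Lemma~\ref{WUV censored3}. $\overline{Q}_h$ is obtained from $\widetilde{Q}_h$ by two pointwise operations: subtracting the pessimism bonus $\Gamma_h$, and clipping to $[0, H-h+1]$ via $\min\{\cdot, H-h+1\}^{+}$. So I will show that each operation preserves the generalized Lipschitz property with respect to $\left|\|b_h\|_2 - \|b_h'\|_2\right|$ with constant $L_h$.

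\textbf{Step 1 (clipping).} Clipping is harmless. The map $x \mapsto \min\{x, c\}^{+}$ is $1$-Lipschitz on $\mathbb{R}$, so for any $f$ we have
\[
\bigl|\min\{f(b)\}^{+}-\min\{f(b')\}^{+}\bigr| \le |f(b)-f(b')|.
\]
It therefore suffices to bound $|\widehat{Q}_h^p(b_h)-\widehat{Q}_h^p(b_h')|$.

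\textbf{Step 2 (pessimism term).} I write
\[
\widehat{Q}_h^p = \widetilde{Q}_h - \Gamma_h,
\]
bound the $\widetilde{Q}_h$ difference by Lemma~\ref{WUV censored3}, and then handle the $\Gamma_h$ difference. This is where I expect the main obstacle: $\Gamma_h$ depends on the private counts $\widetilde{n}_{s_h,a_h}$, which are not a priori functions of $\|b_h\|_2$ alone. I would resolve this with the discretization. Within a zone of the $\ell_2$-norm partition, the counts and the private kernel $\widetilde{P}_h$ are assigned per zone, so $\Gamma_h$ is constant across points in the same norm shell. For such points the bonus cancels, and the bound follows directly from Lemma~\ref{WUV censored3}.

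\textbf{Step 3 (induction, if needed).} If $\Gamma_h$ is not literally zone-constant, I would instead rerun the induction of Lemma~\ref{WUV censored3} directly on $\overline{Q}_h$. The base case $h=H$ uses Assumption~\ref{asp: lipschitz} on $r_H$, since $\overline{Q}_{H+1}=0$. The inductive step writes
\[
\overline{Q}_h = \min\{r_h + \widetilde{P}_h\widetilde{V}_{h+1} - \Gamma_h,\ H-h+1\}^{+}.
\]
It then uses the non-expansiveness of clipping, treats $\Gamma_h$ as depending on $b_h$ only through its zone (hence through $\|b_h\|_2$), and uses the inductive hypothesis $L_{h+1}=(H-h)L$ on the propagated term. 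Adding the reward term gives $L_h=L+(H-h)L=(H-h+1)L$, exactly as in the preceding lemma.

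Points outside the observable region $b_h \ge \boldsymbol{\lambda}_h$ are handled separately. There $\overline{Q}_h$ is set to the boundary value, which only reduces differences, because the boundary point's norm lies between the two norms in the relevant comparison, or the two values coincide.
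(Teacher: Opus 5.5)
Your Step 1 together with Lemma~\ref{WUV censored3} is exactly the paper's argument. The paper cites that lemma and then asserts that ``minimization and truncation'' preserve the generalized Lipschitz property, without ever accounting for the subtraction of $\Gamma_h$. You are right that $\Gamma_h$ is where the real work lies, but your resolution fails. Zone-constancy makes the bonus cancel only when $b_h$ and $b_h'$ lie in the same norm shell. For a pair on opposite sides of a shell boundary $\|b\|_2=(w+d)m/M$, Lemma~\ref{WUV censored3} gives
\[
\bigl|\widehat{Q}_h^p(b_h)-\widehat{Q}_h^p(b_h')\bigr| \ge \bigl|\Gamma_h(b_h)-\Gamma_h(b_h')\bigr| - L_h\bigl|\|b_h\|_2-\|b_h'\|_2\bigr|.
\]
As both points approach the boundary, the left-hand side tends to the jump in the bonus between the two shells, while the bound you want tends to $0$. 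Clipping does not help when both values lie in $[0,H-h+1]$, because $\min\{\cdot,H-h+1\}^{+}$ is then the identity. A nonconstant step function of $\|b_h\|_2$ is not generalized Lipschitz with any finite constant. So pairs in different zones, which the lemma explicitly covers, are not handled.

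Step 3 has the same defect. Depending on $b_h$ only through $\|b_h\|_2$ is not the same as being Lipschitz in $\|b_h\|_2$. The bookkeeping $L+(H-h)L=L_h$ also leaves no room for any contribution from $\Gamma_h$, so even a Lipschitz bonus would give a constant larger than $L_h$. Your base case overlooks that $\overline{Q}_H=\min\{r_H-\Gamma_H,1\}^{+}$ still contains $\Gamma_H$. Zone-constancy is not supported by the construction either: the algorithm computes $\Gamma_h(b_m^j)$ separately at each basis vector from count-based quantities, so the bonus can vary even within a zone.

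What is missing is a hypothesis on the bonus, e.g.\ $\Gamma_h$ constant on the whole domain, or generalized Lipschitz with its constant added to $L_h$. Without it, the bound holds only up to an additive $|\Gamma_h(b_h)-\Gamma_h(b_h')|$, and the paper's proof does not supply such a hypothesis either.

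Two smaller points. First, rerunning the induction requires transferring Lipschitzness of $\widetilde{V}_{h+1}$ in the next state to $\widetilde{P}_h\widetilde{V}_{h+1}$ as a function of $b_h$. That needs regularity of $\widetilde{P}_h(\cdot\mid b_h)$ in $b_h$, which is assumed nowhere; the same leap occurs in Lemma~\ref{WUV censored3}. Second, your last paragraph conflates $\overline{Q}_h$ with $\underline{Q}_h$. It is $\underline{Q}_h$ that is frozen at $\overline{Q}_h(\boldsymbol{\lambda}_h)$ outside the observable region, which is the business of Lemma~\ref{WUV censored5}. Your claim that $\|\boldsymbol{\lambda}_h\|_2$ lies between the two norms is also false, because $b'\ge\boldsymbol{\lambda}_h$ only requires one coordinate to reach its threshold. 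For example, $\boldsymbol{\lambda}_h=(0.5,0.5)$ and $b'=(0.5,0)$ give $\|b'\|_2<\|\boldsymbol{\lambda}_h\|_2$.
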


\begin{proof}
By Lemma \ref{WUV censored3}, we have
\[
\left|\widetilde{Q}_{h}\left( b_h\right)-\widetilde{Q}_{h}\left(b_h^{\prime}\right)\right| 
\leq L_h\left| \left\| b_h \right\|_2 - \left\| b_h^{\prime} \right\|_2 \right|.
\]

Recall that $\overline{Q}_h$ is defined as 
 $\overline{Q}_h= \min \left\{\widetilde{Q}_h(b_{m}^j) - \Gamma_h(b_{m}^j), H-h+1\right\}^{+}$.
 
Since minimization and truncation preserves the generailized Lipschitz property, we directly have
\[
\left|\overline{Q}_{h}\left( b_h\right)-\overline{Q}_{h}\left(b_h^{\prime}\right)\right|
\leq L_h\left| \left\| b_h \right\|_2 - \left\| b_h^{\prime} \right\|_2 \right|.
\]

This completes the proof.
\end{proof}

\begin{Lemma}
\label{WUV censored5}
Based on Assumption 1, we have that for all $b_h=(s_h, a_h) \in[0, 1]^{w+d}$ and $ b_h^{\prime} =(s_h^\prime, a_h^{\prime}) \in[0, 1]^{w+d}$,
\begin{align}
\left|\underline{Q}_{h}\left( b_h\right)-\underline{Q}_{h}\left(b_h^{\prime}\right)\right|
&=\left|\underline{Q}_{h}\left( s_h, a_h\right)-\underline{Q}_{h}\left(s_h^{\prime}, a_h^{\prime}\right)\right| \\
&
\leq L_h\left| \left\| b_h \right\|_2 - \left\| b_h^{\prime} \right\|_2 \right|,
\end{align}
where $L_h=(H-h+1)L$ and $L=\max l_h$.
\end{Lemma}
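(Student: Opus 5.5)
We follow the pattern of Lemma~\ref{WUV censored4}: the generalized Lipschitz property of $\overline{Q}_h$ should pass to the interpolant $\underline{Q}_h$. Since $\underline{Q}_h$ is defined separately on the observable region and outside it, we split into cases according to where $b_h$ and $b_h'$ lie. Throughout, the only ingredient we use is Lemma~\ref{WUV censored4}, namely $|\overline{Q}_h(b)-\overline{Q}_h(b')|\le L_h\,|\|b\|_2-\|b'\|_2|$.

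\emph{Case 1: both points observable and in the same zone $m$.} Write $b_h=\sum_j m_j b_m^j$ and $b_h'=\sum_j m_j' b_m^j$ in the orthogonal basis produced by Gram--Schmidt. Then
\[
\underline{Q}_h(b_h)-\underline{Q}_h(b_h')=\sum_j (m_j-m_j')\,\overline{Q}_h(b_m^j).
\]
Each basis value $\overline{Q}_h(b_m^j)$ differs from a common reference value $\overline{Q}_h(b_m^1)$ by at most $L_h$ times the radial spread of the zone, which is $(w+d)/M$. We then argue that the interpolant behaves radially, i.e., it depends on $b_h$ through $\|b_h\|_2$ up to Lipschitz error in the norm. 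Concretely, we compare both $\underline{Q}_h(b_h)$ and $\underline{Q}_h(b_h')$ with $\overline{Q}_h$ evaluated at the same points and use the triangle inequality, so that the difference is controlled by $L_h|\|b_h\|_2-\|b_h'\|_2|$.

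\emph{Case 2: observable points in different zones.} We chain through the points on the zone boundaries along the radial segment between $\|b_h\|_2$ and $\|b_h'\|_2$. For this we need continuity of $\underline{Q}_h$ across zone boundaries. We then sum the per-zone bounds from Case 1; the radial increments telescope to $|\|b_h\|_2-\|b_h'\|_2|$.

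\emph{Case 3: at least one point outside the boundary.} There $\underline{Q}_h$ equals the constant $\overline{Q}_h(\boldsymbol{\lambda}_h)$. If both points are outside, the difference is $0$. Otherwise we compare the observable point with $\boldsymbol{\lambda}_h$ using Lemma~\ref{WUV censored4} and Cases 1--2. This gives the bound $L_h|\|b_h\|_2-\|\boldsymbol{\lambda}_h\|_2|$, which is at most $L_h|\|b_h\|_2-\|b_h'\|_2|$ provided the clipped region is ordered radially relative to $b_h'$.

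The main obstacle is Case 1. A linear combination $\sum_j m_j\overline{Q}_h(b_m^j)$ need not be Lipschitz in the norm unless $\sum_j m_j$ is controlled; for example $\sum_j m_j\neq 1$ introduces a scaling error. I would first try to impose that the coefficients act as barycentric weights, or to normalize by $\sum_j m_j$. If that is not available, I would absorb the discrepancy into the $O(L_h\sqrt{w+d}/M)$ slack, leaning on the same induction structure as Lemma~\ref{WUV censored3}.
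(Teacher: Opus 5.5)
Your proposal does not establish the stated inequality. The obstacle you flag in Case~1 cannot be removed, because for $w+d\ge 2$ the inequality fails in general for the interpolant as defined. Lemma~\ref{WUV censored4} forces $\overline{Q}_h(b)=g(\|b\|_2)$ for some scalar function $g$, since equal norms give a zero right-hand side. Inside a zone, the orthogonal-basis coefficients are $m_j(b)=\langle b,b_m^j\rangle/\|b_m^j\|_2^2$. Hence $\underline{Q}_h(b)=\langle c_m,b\rangle$ with $c_m=\sum_j\overline{Q}_h(b_m^j)\,b_m^j/\|b_m^j\|_2^2$. A linear functional is constant on a relatively open piece of a sphere only if it is zero, i.e.\ only if $\overline{Q}_h$ vanishes at every node. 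Concretely, take $w+d=2$, $\boldsymbol{\lambda}_h=(1,1)$, $M=4$, the zone $\{1/2\le\|b\|_2<1\}$, and nodes $b_m^1=(0.6,0)$, $b_m^2=(0,0.6)$; these are already orthogonal and share the value $q=g(0.6)$. Then $\underline{Q}_h(x,y)=q(x+y)/0.6$. The points $b_h=(0.7,0)$ and $b_h'=(0.7/\sqrt{2},0.7/\sqrt{2})$ have equal norms, yet $\underline{Q}_h(b_h')-\underline{Q}_h(b_h)=q(0.7\sqrt{2}-0.7)/0.6\neq 0$ whenever $q>0$. So your step ``the interpolant behaves radially'' is exactly what fails: interpolating in a Cartesian basis destroys the radial symmetry of $\overline{Q}_h$. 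Your proposed repairs do not recover the exact bound either. Dividing by $\sum_j m_j$ makes $\underline{Q}_h$ homogeneous of degree $0$, so it is constant along rays rather than on spheres, and it violates the bound as soon as the nodal values differ. Genuinely barycentric weights need $w+d+1$ affinely independent nodes per cell, since with $w+d$ nodes the constraint $\sum_j m_j=1$ only reaches a hyperplane; even then, the affine interpolant is constant on sphere pieces only if all nodal values coincide. Cases~2 and~3 have separate defects. Independently chosen bases in adjacent zones give different linear functionals, so there is no continuity across the spheres $\|b\|_2=(w+d)m/M$. The box $\{b<\boldsymbol{\lambda}_h\}$ is not a ball: for $\boldsymbol{\lambda}_h=(1/2,1/2)$, the points $(0.49,0.49)$ and $(0.49\sqrt{2},0)$ have equal norms but lie on opposite sides. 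So the ``radially ordered'' proviso is not available.

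The paper's own proof does not close this gap either. It is a one-line inheritance claim ($\underline{Q}_h$ is ``a linear combination of generalized $L_h$-Lipschitz functions or a constant''). That claim treats the $b$-dependent coefficients $m_j(b)$ as fixed weights and ignores the gluing across zones and across $\boldsymbol{\lambda}_h$. Your case analysis is more careful and correctly locates where it breaks. What your Case~1 triangle inequality through $\overline{Q}_h$ actually proves is a weaker lemma with additive error. Suppose the weights are convex and the nodes lie in the zone (Gram--Schmidt outputs generally do not). Then for observable $b_h,b_h'$ you get $|\underline{Q}_h(b_h)-\underline{Q}_h(b_h')|\le L_h|\|b_h\|_2-\|b_h'\|_2|+2\varepsilon_h$, where $\varepsilon_h\le L_h(w+d)/M$ is the within-zone interpolation error (cf.\ Theorem~\ref{lemma:Q_difference_bound}). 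When $b_h'$ is unobservable, an extra term $L_h|\|b_h'\|_2-\|\boldsymbol{\lambda}_h\|_2|$ appears. That is not the statement as written, and Lemma~\ref{WUV censored6} and everything downstream would have to carry these additive terms. The exact bound on the observable region would require an interpolant that is itself radial. One example is $\underline{Q}_h(b)=\hat g(\|b\|_2)$, with $\hat g$ the piecewise-linear interpolant of $g$ at the radii $(w+d)m/M$; one-dimensional linear interpolation does not increase a Lipschitz constant. That is not the construction in the paper, and the box-shaped censoring would still need separate treatment.
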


\begin{proof}
By Lemma \ref{WUV censored4}, we know that $\overline{Q}_h$ satisfies:
\[
\left|\overline{Q}_{h}\left( b_h\right)-\overline{Q}_{h}\left(b_h^{\prime}\right)\right|
\leq L_h\left| \left\| b_h \right\|_2 - \left\| b_h^{\prime} \right\|_2 \right|.
\]

Recall that $\underline{Q}_{h}$ is defined as a censored or piecewise-linear version of $\overline{Q}_h$:
\[
\underline{Q}_h(s_h, a_h) =
\begin{cases}
\displaystyle \sum_{j=1}^{w+d} m_j \, \overline{Q}_h(b_m^j), & \text{if } (s_h, a_h) < \boldsymbol{\lambda}_h, \\[2mm]
\overline{Q}_h(\boldsymbol{\lambda}_h), & \text{if } (s_h, a_h) \ge \boldsymbol{\lambda}_h.
\end{cases}
\]

\noindent
Since $\underline{Q}_h$ is either a linear combination of generalized $L_h$-Lipschitz functions or a constant (in the truncated region), it preserves the generalized Lipschitz property. Therefore, for all $b_h, b_h' \in [0,1]^{w+d}$, we have
\[
\left|\underline{Q}_{h}\left( b_h\right)-\underline{Q}_{h}\left(b_h^{\prime}\right)\right|
\leq L_h\left| \left\| b_h \right\|_2 - \left\| b_h^{\prime} \right\|_2 \right|.
\]

This completes the proof.
\end{proof}

\begin{Lemma}
\label{WUV censored6}
Based on Assumption 1, we have that for all $b_h=(s_h, a_h) \in[0, 1]^{w+d}$ and $ b_h^{\prime} =(s_h^\prime, a_h^{\prime}) \in[0, 1]^{w+d}$,
\begin{align}
\left|\widetilde{V}_{h}\left( b_h\right)-\widetilde{V}_{h}\left(b_h^{\prime}\right)\right|
&=\left|\widetilde{V}_{h}\left( s_h, a_h\right)-\widetilde{V}_{h}\left(s_h^{\prime}, a_h^{\prime}\right)\right| 
\\& \leq L_h\left| \left\| b_h \right\|_2 - \left\| b_h^{\prime} \right\|_2 \right|,
\end{align}
where $L_h=(H-h+1)L$ and $L=\max l_h$.
\end{Lemma}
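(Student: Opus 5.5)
The plan is to obtain the bound for $\widetilde{V}_h$ directly from Lemma~\ref{WUV censored5}, which already gives the generalized Lipschitz property of $\underline{Q}_h$ with constant $L_h$. Recall that Algorithm~\ref{alg1-offRL} defines $\widetilde{V}_h(s_h) = \mathbb{E}_{a_h \sim \widetilde{\pi}_h(\cdot\mid s_h)}[\underline{Q}_h(s_h,a_h)]$, where $\widetilde{\pi}_h$ is greedy with respect to $\underline{Q}_h$. Hence $\widetilde{V}_h(s_h) = \sup_{a_h} \underline{Q}_h(s_h,a_h)$: the maximizing distribution can be taken supported on maximizers, and $\underline{Q}_h$ is bounded in $[0,H-h+1]$ because of the truncation defining $\overline{Q}_h$ and the convex-combination interpolation. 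I will read $\widetilde{V}_h(b_h)$ in the statement, as the paper does, as the value evaluated at the state–action pair $b_h$; in effect this is the quantity $\underline{Q}_h$ after maximization.

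The argument is a standard ``max of Lipschitz functions'' estimate. Two cases arise depending on whether $\widetilde{V}_h$ is viewed as a function of the full pair $b_h$ or of the state only.

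\textbf{Case of the full pair.} If $\widetilde{V}_h(b_h)$ is understood as $\underline{Q}_h(b_h)$ evaluated along the greedy action (so $\widetilde{V}_h(s_h,a_h)=\underline{Q}_h(s_h,a_h)$ on the support of $\widetilde{\pi}_h$), the claim is immediate from Lemma~\ref{WUV censored5}.

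\textbf{Case of the state only.} Here I use the inequality $|\sup_a f(a) - \sup_a g(a)| \le \sup_a |f(a)-g(a)|$. Fix $s_h, s_h'$. Let $a^\star$ nearly attain $\sup_a \underline{Q}_h(s_h,a)$, and pair it with a suitably chosen action $a'$ at $s_h'$, for instance $a'=a^\star$. This gives
\[
\widetilde{V}_h(s_h) - \widetilde{V}_h(s_h') \le \underline{Q}_h(s_h,a^\star) - \underline{Q}_h(s_h',a') \le L_h\bigl|\|(s_h,a^\star)\|_2 - \|(s_h',a')\|_2\bigr|,
\]
and swapping the roles of $s_h$ and $s_h'$ gives the symmetric inequality.

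The main obstacle is matching the right-hand side to $\bigl|\|b_h\|_2-\|b_h'\|_2\bigr|$ with the given $b_h=(s_h,a_h)$, $b_h'=(s_h',a_h')$. The natural pairing of actions does not obviously preserve norm differences. The first thing I would try is to use the fact that $\underline{Q}_h$ depends on $b$ essentially through $\|b\|_2$: Lemma~\ref{WUV censored5} with $\|b\|=\|b'\|$ forces $\underline{Q}_h(b)=\underline{Q}_h(b')$. Consequently, the maximization over actions becomes a maximization of a function of the norm, and the norm differences align.

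In the remaining censored region $(s_h,a_h)\ge\boldsymbol{\lambda}_h$, $\underline{Q}_h$ is constant, so the difference there is zero and the bound holds trivially. Combining both directions yields the claimed $L_h$ bound.
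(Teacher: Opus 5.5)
There is a genuine gap, and it sits at the step you yourself flag as the main obstacle. In the state-only reading, the sentence ``the maximization over actions becomes a maximization of a function of the norm, and the norm differences align'' is not an argument, and no argument can supply it.

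The reason is that the left-hand side $|\widetilde{V}_h(s_h)-\widetilde{V}_h(s_h')|$ does not involve $a_h,a_h'$. The right-hand side $L_h\bigl|\|(s_h,a_h)\|_2-\|(s_h',a_h')\|_2\bigr|$ does, and for \emph{arbitrary} $a_h,a_h'\in[0,1]^d$. Whenever $\bigl|\|s_h\|_2^2-\|s_h'\|_2^2\bigr|\le d$, you can choose $a_h,a_h'$ with $\|s_h\|_2^2+\|a_h\|_2^2=\|s_h'\|_2^2+\|a_h'\|_2^2$, so the right-hand side vanishes. Chaining over $\|s\|_2^2\in[0,w]$ then forces $\widetilde{V}_h$ to be constant on $\mathcal{S}$.

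That is false in general. Granting Lemma~\ref{WUV censored5}, $\underline{Q}_h(b)=g(\|b\|_2)$ for some $L_h$-Lipschitz $g$. Hence $\widetilde{V}_h(s)=\max_{t\in[\|s\|_2,\sqrt{\|s\|_2^2+d}]}g(t)$, and if $g$ attains its maximum only at $t=0$, then $\widetilde{V}_h(0)=g(0)>\widetilde{V}_h(s)$ for every $s\neq 0$.

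Your ``full pair'' case does not close the gap either. Identifying $\widetilde{V}_h(s_h,a_h)$ with $\underline{Q}_h(s_h,a_h)$ on the support of $\widetilde{\pi}_h$ just restates Lemma~\ref{WUV censored5} for greedy pairs. The statement, however, quantifies over all $b_h,b_h'\in[0,1]^{w+d}$.

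What your sup argument does prove, with the pairing $a'=a^\star$ you already propose, is the state-only bound. Since $|\sqrt{x^2+c}-\sqrt{y^2+c}|\le|x-y|$ for $x,y,c\ge 0$, you get $\bigl|\|(s_h,a^\star)\|_2-\|(s_h',a^\star)\|_2\bigr|\le\bigl|\|s_h\|_2-\|s_h'\|_2\bigr|$. After symmetrizing, this gives $|\widetilde{V}_h(s_h)-\widetilde{V}_h(s_h')|\le L_h\bigl|\|s_h\|_2-\|s_h'\|_2\bigr|$, which is the version you should state and prove.

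For comparison, the paper takes a different route: it writes $\widetilde{V}_h$ as an integral against $\widetilde{\pi}_h$. It then replaces $\widetilde{\pi}_h(\cdot\mid s_h')$ by $\widetilde{\pi}_h(\cdot\mid s_h)$ without justification, which is precisely where your $|\sup f-\sup g|\le\sup|f-g|$ step is needed. It also ends with $L_h\|b_h-b_h'\|_2$, which is weaker than the stated right-hand side. So the paper does not establish the literal statement either, and your max-based route is the sound way to handle the mismatch between the two policies.
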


\begin{proof}
By Lemma \ref{WUV censored5}, we know that $\underline{Q}_h$ is generalized $L_h$-Lipschitz:
\[
\left|\underline{Q}_{h}\left( b_h\right)-\underline{Q}_{h}\left(b_h^{\prime}\right)\right|
\leq L_h\left| \left\| b_h \right\|_2 - \left\| b_h^{\prime} \right\|_2 \right|.
\]

Recall that $\widetilde{V}_h$ is defined as
\[
\widetilde{V}_h(s_h) = \int_{a_h \in \mathcal{A}} \underline{Q}_h(s_h, a_h) \, \widetilde{\pi}_h(a_h \mid s_h) \, da_h,
\]
where the integral is taken over the continuous action space $\mathcal{A}$.

Then for any $b_h = (s_h, a_h)$ and $b_h' = (s_h', a_h')$, we have
\[
\begin{aligned}
&\left|\widetilde{V}_h(b_h) - \widetilde{V}_h(b_h')\right|\\
&= \left| \int \underline{Q}_h(s_h, a_h) \, \widetilde{\pi}_h(a_h \mid s_h) \, da_h
- \int \underline{Q}_h(s_h', a_h) \, \widetilde{\pi}_h(a_h \mid s_h') \, da_h \right| \\
&\leq \int \left| \underline{Q}_h(s_h, a_h) - \underline{Q}_h(s_h', a_h) \right| \, \widetilde{\pi}_h(a_h \mid s_h) \, da_h \\
&\leq L_h \int \left\| b_h - b_h' \right\|_2 \, \widetilde{\pi}_h(a_h \mid s_h) \, da_h \\
&= L_h \left\| b_h - b_h' \right\|_2.
\end{aligned}
\]

Here we used that $\widetilde{\pi}_h$ is a probability density function (integrates to 1), so the generalized Lipschitz constant is preserved under the expectation (integral).

This completes the proof.
\end{proof}

\subsubsection{Proof of Theorem 1}

\begin{proof}
We follow the argument in the proof of Lemma~\ref{WUV censored3}. For any $(s_h, a_h) \in [0, \boldsymbol{\lambda}_h]$, suppose it lies in zone $m$ and is represented as a convex combination:
\[
(s_h, a_h) = \sum_{j=1}^{w+d} m_j b_{m}^j,
\]
where $\{b_m^j\}_{j=1}^{w+d}$ are basis vectors in zone $m$ and $\sum_j m_j = 1$.

By definition of $\underline{Q}_h$, we have:
\[
\underline{Q}_h(s_h, a_h) = \sum_{j=1}^{w+d} m_j \overline{Q}_h(b_m^j).
\]

Then,
\begin{align*}
&\left|\underline{Q}_{h}(s_h, a_h) - \overline{Q}_{h}(s_h, a_h)\right|\\
&= \left|\sum_{j=1}^{w+d} m_j \overline{Q}_h(b_m^j) - \overline{Q}_{h}(s_h, a_h)\right| \\
&\leq \sum_{j=1}^{w+d} m_j \left| \overline{Q}_h(b_m^j) - \overline{Q}_{h}(s_h, a_h) \right| \quad \text{(triangle inequality)} \\
&\leq \sum_{j=1}^{w+d} m_j \cdot \frac{L_h \sqrt{w+d}}{M} \quad \text{(by Lemma~\ref{WUV censored3})} \\
&= \frac{L_h \sqrt{w+d}}{M}.
\end{align*}

For the case where $(s_h, a_h) \in [\boldsymbol{\lambda}_h, \mathbf{I}]$, we have:
\begin{align*}
\left|\underline{Q}_{h}(s_h, a_h) - \overline{Q}_{h}(s_h, a_h)\right|
&= \left|\overline{Q}_h(\boldsymbol{\lambda}_h) - \overline{Q}_{h}(s_h, a_h)\right|\\&
\leq L_h \left|w + d - \|\boldsymbol{\lambda}_h\|_2\right|.
\end{align*}

Combining both cases completes the proof.
\end{proof}
\subsection{Proof of the Sub-optimality Bound}
\begin{Assumption}
Define the marginal state-action occupancy for policy $\pi$ at time $h$:
\(d_h^\pi(s, a) := \mathbb{P}[s_h = s \mid s_1 \sim d_1, \pi] \cdot \pi_h(a \mid s).\) There exists an optimal policy $\pi^{\star}$ such that $\pi^{\star}$ is fully supported by a behavior policy $\mu$, i.e., for all $(s_h, a_h) \in \mathcal{S} \times \mathcal{A}$,
\(d_h^\mu(s_h, a_h) > 0\) if \( d_h^{\pi^\star}(s_h, a_h) > 0. \)
 
\end{Assumption}

This assumption is minimal, requiring only that the behavior policy $\mu$ sufficiently covers the state–action space visited by at least one optimal policy. Such a condition is standard in many RL works~\citep{qiao2024offline,yin2021towards}, and it guarantees the learnability of the optimal value $v^{\star}$. In particular, it represents the minimal requirement necessary for accurately estimating $v^{\star}$, as it ensures that the observed trajectories under $\mu$ provide sufficient coverage of optimal behavior.

\begin{Lemma}[ Theorem 3.4.\citep{qiao2024offline}]
\label{lemma: MDP}Define the \emph{trackable set} as
\( \mathcal{C}_h := \left\{(s_h, a_h) : d_h^\mu(s_h, a_h) > 0 \right\}.\) 
DP-APVI satisfies $\rho$-zCDP. Furthermore, under Assumption 1, denote $\bar{d}_m:=\min _{h \in[H]}\left\{d_h^\mu\left(s_h, a_h\right): d_h^\mu\left(s_h, a_h\right)>0\right\}$. For any $0<\delta<1$, there exists constant $c_1>0$, such that when $n>c_1 \cdot \max \left\{H^2, E_\rho\right\} / \bar{d}_m \cdot \iota(\iota=\log (H S A / \delta))$, with probability $1-\delta$, the output policy $\widetilde{\pi}$ of DP-APVI satisfies
$$0 \leq v^{\star}-v^{\widehat{\pi}}$$
$$
 \leq 4 \sqrt{2} \sum_{h=1}^H \sum_{\left(s_h, a_h\right) \in \mathcal{C}_h} d_h^{\pi^{\star}}\left(s_h, a_h\right) \sqrt{\frac{\operatorname{Var}_{P_h\left(\cdot \mid s_h, a_h\right)}\left(V_{h+1}^{\star}(\cdot)\right) \cdot \iota}{n d_h^\mu\left(s_h, a_h\right)}}
$$
$$+\tilde{O}\left(\frac{H^3+S H^2 E_\rho}{n \cdot \bar{d}_m}\right)$$
where $\tilde{O}$ hides constants and Polylog terms, $E_\rho=4 \sqrt{\frac{H \log \frac{4 H S^2 A}{\delta}}{\rho}}$.
\end{Lemma}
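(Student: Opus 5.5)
The statement to prove is the one just above, namely the Theorem 3.4 of \citep{qiao2024offline} restated as Lemma~\ref{lemma: MDP}. It is imported from prior work, so the plan is to check that its hypotheses hold in our setting and then instantiate it, not to reprove it from scratch. It has two parts: the privacy claim for DP-APVI and the sub-optimality bound.

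\textbf{Privacy.} The privacy claim follows exactly as in Lemma~\ref{lemma: offRL}. One trajectory changes at most $H$ visit counts $n_{s_h,a_h}$, and the same for the counts $n_{s_h,a_h,s_{h+1}}$. Hence each family has $\ell_2$ sensitivity at most $\sqrt{2H}$. With $\sigma^2 = 2H/\rho$, Lemma~\ref{lemma: rho-zcdp} gives $\rho/2$-zCDP for each family. Lemma~\ref{lemma: composition} then gives $\rho$-zCDP for the pair. Everything downstream, including the consistency projection, $\widetilde{P}_h$, the pessimistic value iteration and the output policy, is post-processing of the noisy counts.

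\textbf{Sub-optimality bound.} I would follow the standard pessimistic value-iteration analysis, carried out on the finite set of discretized base points. This set replaces the $S\cdot A$ state-action pairs, which explains why $S^2A$ becomes $M^2 w(w+d)$ inside $E_\rho$. The steps are as follows.
\begin{itemize}
\item[(i)] A union bound over all $(h,s,a,s')$ shows that with probability $1-\delta/2$ every Gaussian perturbation is at most $E_\rho/4$. Therefore $|\widetilde n_{s_h,a_h}-n_{s_h,a_h}|\le E_\rho$ and the projected joint counts stay close to the true ones.
\item[(ii)] A multiplicative Chernoff bound shows $n_{s_h,a_h}\ge n d_h^\mu(s_h,a_h)/2$ on the trackable set $\mathcal{C}_h$. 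This uses the assumed lower bound $n \gtrsim \max\{H^2,E_\rho\}\iota/\bar d_m$, and it also gives $\widetilde n_{s_h,a_h}>E_\rho$, so the empirical branch of $\widetilde P_h$ is the one in use.
\item[(iii)] A Bernstein bound controls $(\widetilde P_h-P_h)\widetilde V_{h+1}$ by an empirical-variance term plus $\tilde O(H/\widetilde n + HSE_\rho/\widetilde n)$. This justifies the penalty $\Gamma_h$, so that $\widetilde Q_h-\Gamma_h\le r_h+P_h\widetilde V_{h+1}$ (pessimism).
\item[(iv)] The extended value-difference lemma gives
\[
v^\star - v^{\widetilde\pi}\le 2\sum_h \mathbb{E}_{d_h^{\pi^\star}}[\Gamma_h].
\]
\end{itemize}

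\textbf{Main obstacle.} The hard step is replacing the empirical variance of $\widetilde V_{h+1}$ inside $\Gamma_h$ by $\mathrm{Var}_{P_h}(V^\star_{h+1})$, which is what yields the leading $4\sqrt2$ term. I would first obtain a crude bound $\|\widetilde V_{h+1}-V^\star_{h+1}\|_\infty\lesssim \tilde O(\sqrt{H^4/(n\bar d_m)})$. That bound transfers between the variances at a cost of second-order terms $\tilde O((H^3+SH^2E_\rho)/(n\bar d_m))$. This is where the condition $n\gtrsim H^2/\bar d_m$ is needed.
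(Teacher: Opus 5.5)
Your proposal matches how the paper handles this statement: the paper gives no proof of its own and imports the result verbatim from Theorem 3.4 of \citep{qiao2024offline}. Your privacy half is exactly the argument of Lemma~\ref{lemma: offRL} (sensitivity $\sqrt{2H}$, $\sigma^2=2H/\rho$, two $\rho/2$-zCDP releases composed, then post-processing), and your sub-optimality outline follows the structure of the cited DP-APVI analysis.

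There are two small slips in that outline. First, since $E_\rho=2\sqrt{2}\,\sigma\sqrt{\log(4HS^2A/\delta)}$, the Gaussian union bound only controls each perturbation by $E_\rho/2$, not $E_\rho/4$; this is still enough, because it makes the true counts feasible for the projection and yields $|\widetilde n-n|\le E_\rho$. Second, your remark about discretized base points replacing the $S\cdot A$ pairs belongs to Theorem~\ref{thm1}, not to this purely tabular lemma.
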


Next, we show the sub-optimality bound of  Algorithm 1 below.

\begin{Theorem}[Optimality Gap]
\label{thm1}
 under Assumption 1, denote $\bar{d}_m:=\min _{h \in[H]}\left\{d_h^\mu\left(s_h, a_h\right): d_h^\mu\left(s_h, a_h\right)>0\right\}$. For any $0<\delta<1$, there exists constant $c_1>0$, such that when $n>c_1 \cdot \max \left\{H^2, E_\rho\right\} / \bar{d}_m \cdot \iota(\iota=\log (H M(w+d) / \delta))$, with probability $1-\delta$,  the output policy $\widetilde{\pi}$ of Algorithm 1 satisfies
\begin{align*}
   & 0 \leq v^{\star}-v^{\widetilde{\pi}}\\& \leq 4 \sqrt{2} \sum_{h=1}^H \sum_{\left(s_h, a_h\right) \in \mathcal{C}_h} d_h^{\pi^{\star}}\left(s_h, a_h\right) \sqrt{\frac{\operatorname{Var}_{P_h\left(\cdot \mid s_h, a_h\right)}\left(V_{h+1}^{\star}(\cdot)\right) \cdot \iota}{n d_h^\mu\left(s_h, a_h\right)}}\\
    &\quad +\tilde{O}\left(\frac{H^3+Mw H^2 E_\rho}{n \cdot \bar{d}_m}\right)\\&+ \frac{ L_1 \sqrt{w + d}}{M}  + L_1 \left|w + d - \|\boldsymbol{\lambda}_h\|_2\right|
\end{align*}
where $\tilde{O}$ hides constants and Polylog terms, $E_\rho=4 \sqrt{\frac{H \log \frac{4 H M^2w(w+d)}{\delta}}{\rho}}$.
\end{Theorem}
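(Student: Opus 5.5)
The plan is to reduce Algorithm~\ref{alg1-offRL} to DP-APVI run on an auxiliary \emph{finite} MDP built from the discretization, and then add back the approximation error of Theorem~\ref{lemma:Q_difference_bound}. The auxiliary MDP has state--action set equal to the collection of base points $\{b_m^j\}_{m\in[M],j\in[w+d]}$ inside the observable region, together with the boundary point $\boldsymbol{\lambda}_h$. Its effective cardinality plays the role of $SA$ with $S \asymp Mw$ and $SA\asymp M(w+d)$. This is exactly what produces $\iota=\log(HM(w+d)/\delta)$ and the modified $E_\rho$ with $S^2A$ replaced by $M^2w(w+d)$.

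The first step is a decomposition of the gap
\[
v^\star - v^{\widetilde{\pi}} = \big(v^\star - \bar v^\star\big) + \big(\bar v^\star - \bar v^{\widetilde{\pi}}\big) + \big(\bar v^{\widetilde{\pi}} - v^{\widetilde{\pi}}\big).
\]
Here $\bar v$ denotes values in the discretized, piecewise-linearly interpolated model, with the boundary extension used for $(s_h,a_h)\ge\boldsymbol{\lambda}_h$.

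The middle term is the private offline learning error on a finite model. On the base points the algorithm performs exactly the DP-APVI steps: private counts, private kernel~\eqref{private transition1}, Bernstein pessimism $\Gamma_h$, and truncation. So Lemma~\ref{lemma: MDP} applies verbatim after substituting $S\to Mw$, $SA\to M(w+d)$. This yields the variance term and the term $\tilde O\big((H^3+MwH^2E_\rho)/(n\bar d_m)\big)$ under the stated sample condition. Coverage of the base points follows from Assumption~2 restricted to $\mathcal C_h$. The lower bound $0\le v^\star-v^{\widetilde\pi}$ is immediate from the optimality of $\pi^\star$.

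The outer two terms are the approximation error. I would unroll the Bellman equations via a simulation-lemma style telescoping:
\[
v^\star-\bar v^\star=\sum_h \mathbb E_{\pi^\star}\big[(Q_h-\underline Q_h)(s_h,a_h)\big] \text{ plus propagated terms.}
\]
Each per-step difference is bounded by Theorem~\ref{lemma:Q_difference_bound}, giving $L_h\sqrt{w+d}/M + L_h|w+d-\|\boldsymbol{\lambda}_h\|_2|$.

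This step is the main obstacle, and I expect to fight the bookkeeping here. Naively summing over $h$ gives $\sum_h L_h = O(H^2L)$, whereas the claim is stated with $L_1$ only. To get $L_1$, I would avoid summing per-step errors. Instead I would note that $L_h=(H-h+1)L$ already encodes accumulation through Lemma~\ref{WUV censored3}: the generalized Lipschitz constant of the stage-$h$ Q-function already accounts for all future stages. So I would compare $\underline Q_1$ and $\overline Q_1$ once at stage $1$, using Lemmas~\ref{WUV censored4}--\ref{WUV censored6} to argue that the interpolated value functions inherit the generalized Lipschitz property. The stage-$1$ comparison then captures the full error, and Theorem~\ref{lemma:Q_difference_bound} at $h=1$ contributes $L_1\sqrt{w+d}/M+L_1|w+d-\|\boldsymbol{\lambda}_h\|_2|$.

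If that argument does not close, I would fall back to the summed bound and absorb the extra $H$ factors into $\tilde O$. Finally, all bounds hold simultaneously on the $1-\delta$ event of Lemma~\ref{lemma: MDP}, which completes the proof.
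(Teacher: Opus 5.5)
Your overall route is the paper's. Its proof also just invokes the DP-APVI bound with $S\to Mw$, $SA\to M(w+d)$, and then adds the $h=1$ bound of Theorem~\ref{lemma:Q_difference_bound}. The problem is exactly the step you flag, and your proposed fix does not work.

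Theorem~\ref{lemma:Q_difference_bound} at $h=1$ controls $|\underline{Q}_1-\overline{Q}_1|$. But $\overline{Q}_1$ is built from $\widetilde V_2=\max_a\underline{Q}_2$, so it already carries the stage-$2$ interpolation error, which carries stage $3$'s, and so on. That $L_1=HL$ ``encodes accumulation'' only says $Q_1$ varies at rate $L_1$ in $\|b\|_2$, so a single interpolation at stage $1$ costs about $L_1\sqrt{w+d}/M$. Inheriting the generalized Lipschitz property (Lemmas~\ref{WUV censored4}--\ref{WUV censored6}) bounds each stage's interpolation error, but it does not remove the later ones. Unroll with the value-difference lemma: $\widetilde V_1-V_1^{\widetilde\pi}=\sum_h\mathbb{E}_{\widetilde\pi}\big[\underline{Q}_h-r_h-P_h\widetilde V_{h+1}\big]$ and $V_1^\star-\widetilde V_1\le\sum_h\mathbb{E}_{\pi^\star}\big[r_h+P_h\widetilde V_{h+1}-\underline{Q}_h\big]$. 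Every summand contains $\underline{Q}_h-\overline{Q}_h$, so this route charges $2\sum_h L_h\big(\sqrt{w+d}/M+|w+d-\|\boldsymbol{\lambda}_h\|_2|\big)$. Since $\sum_h L_h=\tfrac{H+1}{2}L_1$, that is about $(H+1)L_1(\cdots)$. Nothing forces these errors to cancel: linear interpolation of a convex function lies above it, so they can all share a sign. Your three-term split likewise charges the approximation error twice, once in each outer term, while the statement has coefficient one.

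The fallback does not give the statement either. The approximation terms sit explicitly outside $\tilde O$, and $\tilde O$ hides only constants and polylogarithmic factors. An $n$-independent term with an extra $\Theta(H)$ factor cannot be absorbed into $\tilde O\big((H^3+MwH^2E_\rho)/(n\bar d_m)\big)$. So what you can actually prove is the bound with $L_1$ replaced by roughly $\sum_h L_h$, which is weaker. The paper's proof performs the same additive step with no propagation argument, so it does not supply the missing idea either.

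Two other steps you call immediate also need real work:
\begin{itemize}
\item Lemma~\ref{lemma: MDP} does not apply ``verbatim''. Continuous data hit the base points $b_m^j$ with probability zero, and their next states are not base points. You must specify how counts are pooled over zones and how next states are discretized to get a genuine finite MDP. The coverage assumption for $\pi^\star$ also says nothing about randomly chosen base points.
\item Theorem~\ref{lemma:Q_difference_bound} compares two \emph{estimated} quantities. By itself it cannot bound your model-mismatch terms $v^\star-\bar v^\star$ and $\bar v^{\widetilde\pi}-v^{\widetilde\pi}$, which involve the true $Q^\star$ and $Q^{\widetilde\pi}$.
\end{itemize}
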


\begin{proof}
 We discretize the state and action spaces into $M$ zones, each containing $(w + d)$ basis vectors, in order to adapt the DP-APVI algorithm  \citep{qiao2024offline} to the multi-dimensional RL setting with one-sided feedback. According to their Lemma~\ref{lemma: MDP}, the sub-optimality gap induced by estimation is bounded as:
\begin{align*}
&\text{Sub-Optimality Gap}_{\text{MDP}} 
\\&= 4\sqrt{2} \sum_{h=1}^H \sum_{(s_h, a_h) \in \mathcal{C}_h} d_h^{\pi^\star}(s_h, a_h) 
\sqrt{ \frac{ \operatorname{Var}_{P_h(\cdot \mid s_h, a_h)}\left(V_{h+1}^\star(\cdot)\right) \cdot \iota }{ n \cdot d_h^\mu(s_h, a_h) } }\\
&\quad
+ \tilde{O}\left( \frac{H^3 + M w H^2 E_\rho}{n \cdot \bar{d}_m} \right).
\end{align*}

In addition to this estimation error, discretization itself incurs an approximation error. By Theorem 1, the maximum discrepancy between the approximated and true $Q$-values due to discretization is bounded by:
\[
\frac{L_1 \sqrt{w + d}}{M} + L_1 \left| w + d - \|\boldsymbol{\lambda}_h\|_2 \right|.
\]

Therefore, the total optimality gap is bounded by:
\begin{align*}
&4\sqrt{2} \sum_{h=1}^H \sum_{(s_h, a_h) \in \mathcal{C}_h} d_h^{\pi^\star}(s_h, a_h) 
\sqrt{ \frac{ \operatorname{Var}_{P_h(\cdot \mid s_h, a_h)}\left(V_{h+1}^\star(\cdot)\right) \cdot \iota }{ n \cdot d_h^\mu(s_h, a_h) } } \\
&\quad + \tilde{O}\left( \frac{H^3 + M w H^2 E_\rho}{n \cdot \bar{d}_m} \right) 
+ \frac{L_1 \sqrt{w + d}}{M} 
+ L_1 \left| w + d - \|\boldsymbol{\lambda}_h\|_2 \right|.
\end{align*}

This completes the proof.
\end{proof}

\subsection{Sample Complexity: Theorem 2}
\begin{proof}
By Theorem \ref{thm1}, we have:
\[
\begin{aligned}
&v^{\pi^{\star}} - v^{\widetilde{\pi}} \\&\leq 4\sqrt{2} \sum_{h=1}^H \sum_{(s_h, a_h) \in \mathcal{C}_h} d_h^{\pi^{\star}}(s_h, a_h) \cdot \sqrt{ \frac{ \operatorname{Var}_{P_h(\cdot \mid s_h, a_h)}\left(V_{h+1}^{\star}(\cdot)\right) \cdot \iota }{ n \cdot d_h^\mu(s_h, a_h) } } \\
&\quad + \tilde{O}\left( \frac{H^3 + M w H^2 E_\rho}{n \cdot \bar{d}_m} \right) + \frac{ L_1 \sqrt{w + d}}{M}  + L_1 \left|w + d - \|\boldsymbol{\lambda}_h\|_2\right|.
\end{aligned}
\]
To ensure that the learned policy $\widetilde{\pi}$ satisfies \( v^{\star} - \alpha - \tilde{O}\left(L_1 \left|w + d - \|\boldsymbol{\lambda}_h\|_2\right|\right) \leq v^{\widetilde{\pi}} \leq v^{\star}\), it suffices to set the number of discretization zones $M$ and sample size $n$ such that:
\[
M \geq M(L_1, \alpha, d, w) = O\left(\frac{ L_1 \sqrt{w + d} }{ \alpha } \right), 
\]
$$
n \geq n(H, E_\rho, \alpha) = \tilde{O}\left( \frac{(1+ E_\rho) H^3 }{ \alpha^2 } \right).$$

This completes the proof.
\end{proof}

\bibliographystyle{named}
\bibliography{ijcai26}

\end{document}